\documentclass{article} 
\usepackage{iclr2027_conference,times}

\usepackage{amsmath,amsfonts,bm}

\def\eqref#1{equation~\ref{#1}}

\def\1{\bm{1}}

\DeclareMathAlphabet{\mathsfit}{\encodingdefault}{\sfdefault}{m}{sl}
\SetMathAlphabet{\mathsfit}{bold}{\encodingdefault}{\sfdefault}{bx}{n}

\usepackage[pagebackref=false,breaklinks=true,letterpaper=true,colorlinks,bookmarks=false]{hyperref}
\usepackage{url}
\usepackage{amsmath,amssymb,amsthm}
\usepackage{booktabs}
\usepackage{graphicx}
\usepackage{multirow}
\usepackage{placeins}
\usepackage[T1]{fontenc}
\usepackage{array}

\usepackage{pgfplots}
\pgfplotsset{compat=1.18}
\usepgfplotslibrary{groupplots,fillbetween}
\definecolor{cNavy}{HTML}{17324D}
\definecolor{cTeal}{HTML}{329A99}
\definecolor{cViolet}{HTML}{8272B5}
\definecolor{cAmber}{HTML}{E3A13B}
\definecolor{cBlue}{HTML}{3F6FA6}
\definecolor{cGray}{HTML}{7A8390}
\pgfplotsset{
  paperaxis/.style={
    tick label style={font=\scriptsize, color=cNavy},
    label style={font=\scriptsize, color=cNavy},
    title style={font=\scriptsize, color=cNavy, at={(0,1)}, anchor=south west, yshift=-1pt},
    legend style={font=\tiny, draw=none, fill=none, legend cell align=left},
    axis line style={cNavy, thin}, axis lines=left,
    tick style={cNavy}, every axis plot/.append style={line width=0.9pt},
  }
}

\newtheorem{proposition}{Proposition}
\newtheorem{lemma}{Lemma}

\newcommand{\x}{\mathbf{x}}
\newcommand{\z}{\mathbf{z}}
\newcommand{\Pinit}{P^{\mathrm{init}}}
\newcommand{\nullst}{\varnothing}
\newcommand{\msg}{\mathcal{E}}
\newcommand{\freeE}{\mathcal{F}}
\newcommand{\gibbs}{\mathcal{P}}

\usepackage[capitalize]{cleveref}
\crefname{section}{Sec.}{Secs.}
\Crefname{section}{Section}{Sections}
\Crefname{table}{Table}{Tables}
\crefname{appendix}{App.}{Apps.}
\Crefname{appendix}{Appendix}{Appendices}

\title{UnfoldCRF: Structured Mask Refinement with Image-Conditioned Latent Regions}

\author{
Chunming He$^{1}$\,,
Rihan Zhang$^{1}$\,,
Lei Xu$^{2}$\,,
Guanyi Qin$^{3}$\,, \\
\textbf{ Chengyu Fang}$^4$\,,
\textbf{Longxiang Tang}$^{5}$\,,
\textbf{Fengyang Xiao}$^{1,*}$\,,
\textbf{Sina Farsiu$^{1,*}$}\\
	$^1$Duke University, ~$^2$ EPFL, ~$^3$ National University of Singapore, \\ ~$^4$Tsinghua University, ~ $^5$Harvard University
\\
~ The first two authors contributed equally,
$*$ Corresponding Author. \\
~ Contact: chunming.he@duke.edu
 }

\iclrfinalcopy 

\begin{document}

\maketitle

\begin{abstract}
\looseness=-1
Learned mask refiners improve segmentation accuracy, but it is hard to tell how much of the improvement comes from explicit structure rather than from extra capacity, and whether it holds up when the mask generator or its error distribution changes. UnfoldCRF treats refinement as inference in a conditional random field over pixel labels and latent region variables. Its energy has a corrected unary term, learned local pairwise interactions, and image-conditioned latent-region consistency, with a null state that lets a region with weak label agreement withdraw from the consistency term; inference unrolls damped mean-field updates on this one energy. To isolate the effect of structure, we compare against recurrent black-box refiners that read the same inputs and receive the same parameter budget, stage count, and supervision. On COD10K, UnfoldCRF beats the strongest matched control by 1.0 $F^\omega_\beta$ point, improves all four COD metrics, and lowers the fraction of images made worse from 11.7\% to 8.5\%. Under a train-once protocol over five datasets and several mask sources, the 2.6M-parameter variant gains 4.2 mean $\Delta$IoU against 2.0 for its control, and a variant built on frozen DINOv2 features matches the strongest foundation-model refiner with about a seventh of its resident parameters while staying ahead of its own control. On mask generators never seen in training, the gain is 2.0 $F^\omega_\beta$ points against 0.6 for the control. Zeroing individual messages shows where the corrections come from: the pairwise messages mostly fix boundaries, the region messages mostly fix non-boundary errors. Code and supporting materials will be publicly released.
\end{abstract}

\setlength{\abovedisplayskip}{2pt}
\setlength{\belowdisplayskip}{2pt}
\section{Introduction}
\label{sec:intro}

Segmentation networks rarely fail as independent pixel noise. On targets entangled with their context, such as camouflaged objects, transparent surfaces, clinical structures, and thin or fragmented shapes, the failures have structure: interiors are hollowed out, components are dropped or hallucinated, and boundaries drift coherently along low-contrast contours \citep{fan2020camouflaged,xie2020trans10k,he2023weakly}. Refinement methods post-process an initial mask to remove such errors, from dense CRFs with prescribed affinity functions \citep{krahenbuhl2011efficient} to learned refiners built on cascades \citep{cheng2020cascadepsp}, boundary correction \citep{yuan2020segfix}, diffusion \citep{wang2023segrefiner,shen2025uncertainty}, and prompted foundation models \citep{linsamrefiner,ke2023hqsam,price2026promptmoe}. Deep unfolding has also been used for iterative refinement, though recent segmentation-oriented unfolding methods derive their updates from reconstruction objectives rather than from an energy over the labels themselves \citep{he2025run,he2025nested}.

Most learned refiners predict mask updates with generic network modules and no explicit energy over the labels. When such a refiner improves on the upstream mask, one cannot tell whether structure did the work or whether capacity, iteration, and supervision did, nor whether the improvement would survive a different mask generator or a different error distribution. We ask: \textbf{under carefully matched inputs, parameter count, iteration count, and supervision, does explicit structured inference provide more reliable refinement than black-box iterative refinement, both in-distribution and under shifts in mask generators and error distributions?} By \emph{reliable} we mean a positive average gain, few images made worse, and a gain that survives shift. A second question is whether the corrections can be traced to particular energy messages.

\textbf{UnfoldCRF} is an image-conditioned latent-region CRF for mask refinement. Its energy has a corrected unary term, a symmetric local pairwise term in a learned feature space, and a mass-normalized consistency term over latent region variables. Learned local soft incidences connect pixels to regions at several granularities; a region variable takes either a semantic label or a null state, and the null state reduces the influence of regions whose members disagree. The construction builds on robust $P^N$ potentials and associative hierarchical CRFs \citep{kohli2009robust,ladicky2009associative,arnab2016higher}, but the pixel--region structure is image-conditioned and learned end to end through unrolled inference. For a given input, every stage runs on the same conditional energy; only a stage-dependent damping coefficient changes.

A refiner can raise in-distribution accuracy in two ways: by learning what valid masks look like, or by learning the error habits of the generators it was trained on. The pairwise and latent-region terms encode the first kind of knowledge, local boundary consistency and regional label agreement, and the unary term keeps the upstream prediction as evidence rather than re-predicting it. When the generator changes, these relations should be less tied to its particular error patterns than a learned update rule is. \Cref{sec:analysis} tests this with held-out generators and message interventions.

Our main contributions are:

\textbf{(1) Structural model.} An image-conditioned latent-region CRF for mask refinement with local learnable soft incidences, symmetric pairwise interactions, and a mass-normalized region-consistency term whose latent state is semantic or null; its MAP reduction is a truncated region-consistency potential with a vote-fraction threshold (Proposition~\ref{prop:map}).

\textbf{(2) Shared-energy unfolding.} A damped mean-field refinement process in which all stages share one conditional energy and only the damping coefficient varies. Damping preserves the fixed points of the shared system (Lemma~\ref{lem:fixedpoint}), and the latent-region message passing is sparse, costing $O(RNsL)$ per stage for $N$ pixels, $R$ levels, $s$ candidate regions per pixel, and $L$ labels.

\textbf{(3) Empirical findings.} Against parameter-matched recurrent controls, UnfoldCRF improves COD10K $F^\omega_\beta$ by 1.0 point and makes fewer images worse; on a five-benchmark train-once suite its S and L variants gain 4.2 and 6.3 mean $\Delta$IoU against 2.0 and 4.6 for their controls. On unseen mask generators it keeps a 2.0-point $F^\omega_\beta$ gain where the control keeps 0.6, and zeroing messages shows that the pairwise messages mainly correct boundary errors while the region messages mainly correct non-boundary false negatives and false positives.

\begin{figure}[t]
\centering
\setlength{\abovecaptionskip}{-0.1cm}
\includegraphics[width=\linewidth]{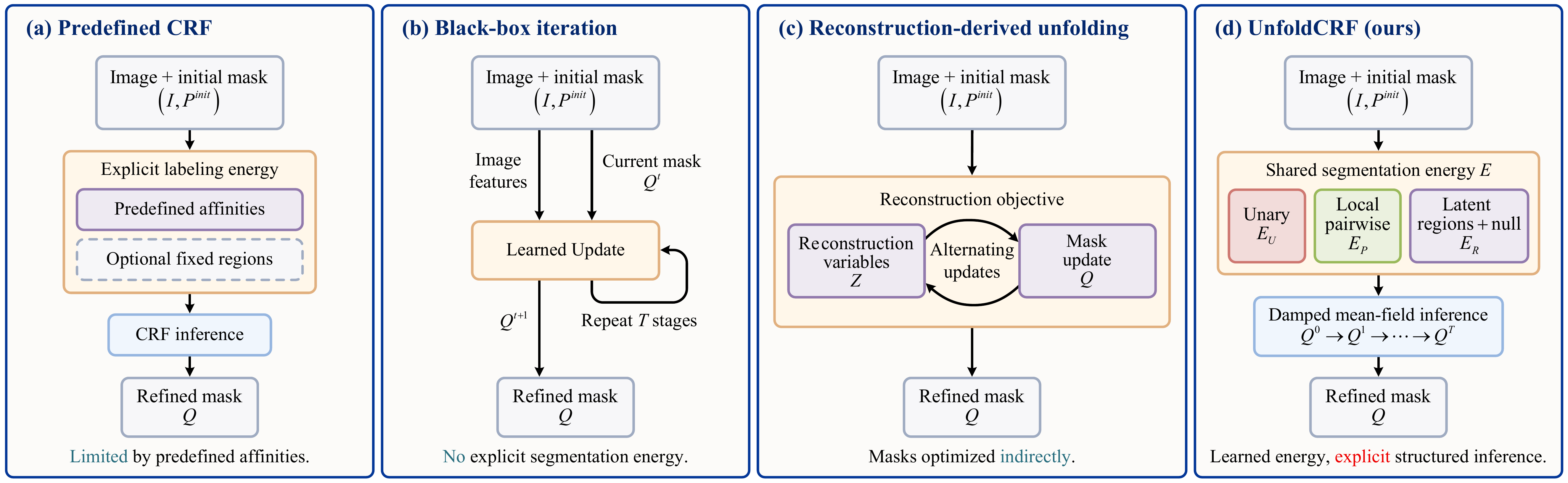}
\caption{\textbf{Refinement formulations.}
(a) Predefined CRFs use prescribed affinities and optional fixed regions.
(b) Black-box refiners learn iterative mask updates without an explicit energy.
(c) Reconstruction-derived unfolding alternates reconstruction and mask updates.
(d) UnfoldCRF performs damped mean-field inference on a shared, image-conditioned segmentation energy with corrected unary, local pairwise, and latent-region terms.}\vspace{-4mm}
\label{fig:StruComp}
\end{figure}

\section{Related work}
\label{sec:related}

\textbf{Higher-order and hierarchical CRFs.}
Robust $P^N$ potentials encode segment-level label consistency with a truncated penalty and admit an exact representation through auxiliary segment variables with a free label \citep{kohli2009robust}; associative hierarchical CRFs stack such auxiliary variables across segmentation layers \citep{ladicky2009associative}. Filter-based mean-field inference extends dense pairwise models to certain higher-order terms \citep{vineet2014filter}. We keep the latent-variable form of these potentials but make the pixel--region associations soft, local, image-conditioned, and trainable through unrolled inference.

\textbf{Neural and trainable CRF inference.}
CRF-as-RNN unrolls parallel mean-field inference for dense Gaussian CRFs into a trainable recurrent module \citep{zheng2015conditional}, while subsequent work recasts CRF message passing using convolutional and pixel-adaptive filtering \citep{teichmann2018convolutional,su2019pixel}. Higher-order detection and superpixel potentials have also been incorporated end to end \citep{arnab2016higher}. Exact inference for Gaussian CRFs can instead be obtained by solving linear systems \citep{chandra2016fast}. Parameter learning and convergent inference for dense CRFs are studied in \citep{krahenbuhl2013parameter}; principled parallel variational inference with convergence guarantees is developed in \citep{baque2016principled}, while Regularized Frank--Wolfe provides a broader optimization framework that includes mean field as a special case \citep{lehuu2021regularized}. Learnable high-dimensional filtering is explored in \citep{jampani2016learning}. Deep unfolding has also been applied to segmentation through reconstruction-derived objectives \citep{he2025run,he2025nested}, whereas the unrolled updates here are those of a labeling energy.

\textbf{Differentiable region grouping.}
SLIC provides a classical hard superpixel construction \citep{achanta2012slic}, while Superpixel Sampling Networks learn task-specific soft superpixels with local candidate assignment \citep{jampani2018superpixel}; recent work further studies differentiable superpixel formation and connectivity \citep{juybari2026differentiable}. Slot Attention \citep{locatello2020object} learns soft pixel-to-slot associations in the same spirit. Our incidences are computed once from each image--mask pair and held fixed; what changes across stages is the label distribution of each region.

\looseness=-1
\textbf{Learned mask refinement.}
CascadePSP combines global and local refinement \citep{cheng2020cascadepsp}, SegFix corrects boundaries \citep{yuan2020segfix}, and SegRefiner formulates refinement as a discrete diffusion process \citep{wang2023segrefiner}. For camouflaged object detection, UMBD uses uncertainty-guided Bernoulli diffusion to selectively refine uncertain regions while exploiting features from the upstream segmenter \citep{shen2025uncertainty}. Foundation-model-based refinement builds on SAM-style promptable segmentation \citep{kirillov2023segment,ke2023hqsam}: SAMRefiner derives noise-tolerant prompts from coarse masks \citep{linsamrefiner}, while PromptMoE uses mixture-of-experts prompting for refinement across tasks and domains \citep{price2026promptmoe}. Phoenix trains on semantically structured mask errors generated by adversarial perturbation and corrects them with contrastive refinement learning \citep{kim2026phoenix}. We differ from these refiners in the question asked: refinement here is inference in a shared segmentation energy, and the effect of that structure is isolated with matched controls.

\section{Method}
\label{sec:method}

\subsection{A conditional random field over masks and latent regions}
\label{sec:crf}

Let $I$ be an image and let $\mathcal{G}$ denote the inference grid with $N$ spatial sites (stride 4 in our implementation); the initial per-site label distribution $\Pinit \in \Delta_L^N$ is the output of an arbitrary upstream model over $L$ classes ($L{=}2$ for binary tasks), resized to $\mathcal{G}$. All site-indexed quantities below live on $\mathcal{G}$. We model refinement as approximate inference in a conditional random field over pixel labels $\x = (x_1,\dots,x_N)$ and latent region labels $\z = \{z^{(r)}_k : 1 \leq r \leq R,\; k \in \mathcal{K}^+_r\}$, one per active region $k$ at each of $R$ granularity levels, where the active-region sets $\mathcal{K}^+_r$ are defined in \cref{sec:region}:
\begin{equation}
E(\x,\z \mid I, \Pinit) \;=\; E_U(\x) + E_P(\x) + E_R(\x,\z),
\label{eq:energy}
\end{equation}
conditioned jointly on $I$ and $\Pinit$. A lightweight encoder consumes $(I,\Pinit)$ and predicts the input-dependent quantities in \cref{eq:energy}: unary corrections, pairwise interaction weights, and pixel--region incidences; the global energy parameters, the label compatibility $\mu$ and the per-level strengths $\beta_r$ and truncations $\kappa_r$, are learned jointly and shared across stages, as is the incidence temperature $\tau_r$ of \cref{eq:assign}. The damping coefficients $\alpha_t$ belong to the inference procedure (\cref{sec:inference}), not to the energy. The input-dependent quantities are computed once and held fixed across inference stages, so every stage performs inference on the same conditional energy; they remain differentiable, so gradients reach the encoder through the unrolled computation, with the exception of the active-region mask, which is fixed and treated as stop-gradient (App. \ref{app:protocols}).

\textbf{Corrected unary.}
The unary term uses the upstream prediction as its base and adds an image-conditioned residual:
\begin{equation}
E_U(\x) = \sum_i \psi^U_i(x_i), \qquad
\psi^U_i(l) = -\log \Pinit_i(l) + \delta_{\theta,i}(l),
\label{eq:unary}
\end{equation}
where $\Pinit$ is clipped to $[\epsilon_0, 1-\epsilon_0]$ and renormalized ($\epsilon_0 = 10^{-4}$) so that hard upstream masks have finite log-evidence, and $\delta_{\theta}$ is a per-pixel, per-label residual predicted by the encoder.

\textbf{Symmetric local pairwise term.}
For neighboring sites we use a signed, image-conditioned pairwise interaction:
\begin{equation}
E_P(\x) = \sum_{i<j} K_{ij}\, \mu(x_i, x_j), \qquad
K_{ij} = \tfrac{1}{2}\big(\widehat{K}_{ij} + \widehat{K}_{ji}\big)\,\mathbf{1}\!\left[j \in \Omega(i)\right],
\label{eq:pairwise}
\end{equation}
where $\Omega(i)$ is the union of $3{\times}3$ windows at dilations $\{1,2,4\}$ around pixel $i$ and $\widehat{K}_{ij} = g_\theta(h_i, j-i)$ is a signed interaction weight predicted by a $1{\times}1$ convolution on the encoder features $h_i$, one output per offset $j-i \in \Omega$.
The window relation is symmetric ($j \in \Omega(i)$ iff $i \in \Omega(j)$) and the explicit symmetrization gives $K_{ij}=K_{ji}$; the learnable label compatibility is symmetric by parameterization, $\mu = \tfrac{1}{2}(\hat\mu + \hat\mu^\top)$, so \cref{eq:pairwise} is a well-defined energy over unordered pixel pairs. The pairwise term stays local; longer-range interactions go through the latent regions. Its mean-field message excludes the self contribution,
\begin{equation}
\msg^{P}_i(l) = \sum_{l'} \mu(l, l') \sum_{j \neq i} K_{ij}\, Q_j(l').
\label{eq:pwmsg}
\end{equation}

\begin{figure}[t]
\centering
\setlength{\abovecaptionskip}{0cm}
\includegraphics[width=\linewidth]{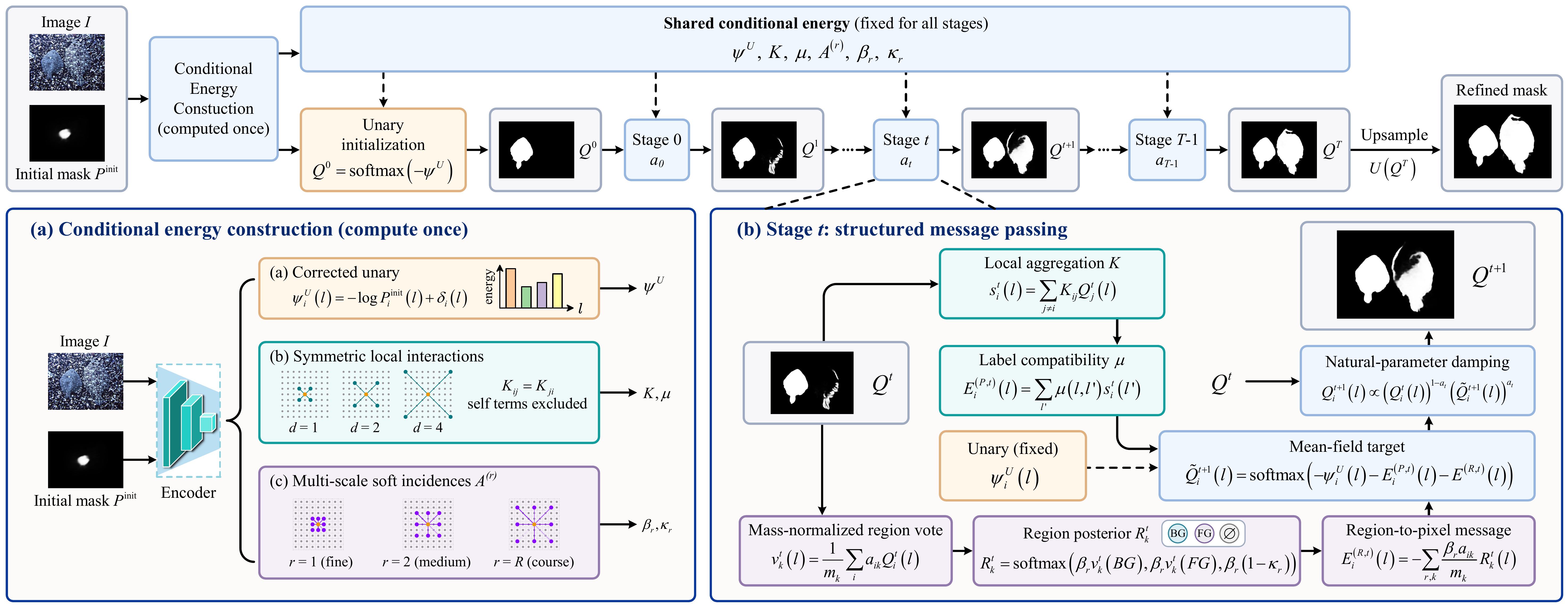}
\caption{\textbf{Framework of UnfoldCRF.}
Top: $I$ and $\Pinit$ define a conditional segmentation energy that is shared by all $T$ stages and initializes $Q^0$; only the damping coefficient $\alpha_t$ varies by stage.
Bottom left: the encoder constructs the corrected unary potential, symmetric local interactions, and multi-scale soft pixel--region incidences.
Bottom right: stage $t$ forms the region posterior $R^t$ from $Q^t$, computes the pairwise and mass-normalized latent-region messages, and combines them with the fixed unary term to obtain $\widetilde{Q}^{t+1}$; natural-parameter damping then produces $Q^{t+1}$.}\vspace{-4mm}
\label{fig:Framework}
\end{figure}

\subsection{Latent-region robust consistency}
\label{sec:region}

\textbf{Soft incidences and active regions.}
All levels share the same inference grid $\mathcal{G}$ but use different region-center densities. At level $r$, the encoder yields pixel embeddings $e^{(r)}_i \in \mathbb{R}^{C}$, $K_r$ region centers are placed on a regular grid, and the center embedding $u^{(r)}_k$ is the average of the pixel embeddings in grid cell $k$.
Following the local-candidate design of differentiable superpixels \citep{jampani2018superpixel}, each pixel is softly assigned only to the $s{=}9$ centers of its $3{\times}3$ neighboring cells $\mathcal{N}_r(i)$:
\begin{equation}
a^{(r)}_{ik} = \frac{\exp\!\big(-\lVert e^{(r)}_i - u^{(r)}_k \rVert^2 / \tau_r\big)}{\sum_{k' \in \mathcal{N}_r(i)} \exp\!\big(-\lVert e^{(r)}_i - u^{(r)}_{k'} \rVert^2 / \tau_r\big)} \;\;\text{for } k \in \mathcal{N}_r(i), \qquad a^{(r)}_{ik} = 0 \;\;\text{otherwise},
\label{eq:assign}
\end{equation}
with a learned per-level temperature $\tau_r > 0$. Rows of $A^{(r)}$ sum to one, the scheme is the same for all tasks and label counts, and across the $R$ levels incidence construction costs $O(RNsC)$. Because $e^{(r)}$ is computed from the encoder features of $(I,\Pinit)$, incidences are conditioned on both the image and the initial mask. Let $m^{(r)}_k = \sum_i a^{(r)}_{ik}$ be the region mass and $\mathcal{K}^+_r = \{k : m^{(r)}_k \geq m_{\min}\}$ the set of active regions; regions outside $\mathcal{K}^+_r$ carry no latent variable and neither enter the energy nor send messages. Since incidences are fixed before inference, $\mathcal{K}^+_r$ is fixed across stages.

\textbf{Robust region energy with a null state.}
Each active region has a latent state $z^{(r)}_k \in \{1,\dots,L\} \cup \{\nullst\}$, where $\nullst$ allows the region to opt out of consistency. With strength $\beta_r = \mathrm{softplus}(b_r) > 0$ and truncation $\kappa_r = \sigma(\gamma_r) \in (0,1)$, the mass-normalized region energy is
\begin{equation}\hspace{-3mm}
E_R(\x,\z) = \sum_{r=1}^{R} \sum_{k \in \mathcal{K}^+_r} \frac{\beta_r}{m^{(r)}_k} \sum_i a^{(r)}_{ik}\, c_{\kappa_r}\big(x_i, z^{(r)}_k\big),
\qquad
c_{\kappa}(x_i, z_k) =
\begin{cases}
\mathbf{1}[x_i \neq z_k], & z_k \neq \nullst,\\[2pt]
\kappa, & z_k = \nullst.
\end{cases}
\label{eq:regionE}
\end{equation}
Mass normalization expresses region disagreement as a weighted label fraction; after eliminating the latent state by MAP, the region penalty is capped at $\beta_r \kappa_r$ independently of soft mass (\cref{prop:map}).

\textbf{Mean-field updates.}
We use the factorization $q(\x,\z) = \prod_i Q_i(x_i) \prod_{r} \prod_{k \in \mathcal{K}^+_r} R^{(r)}_k(z^{(r)}_k)$ and suppress the level index $(r)$ when writing the updates of a single level. With the mass-normalized vote $v_k(l) = \frac{1}{m_k} \sum_i a_{ik}\, Q_i(l)$, the exact mean-field updates of \cref{eq:regionE} are (derivation in App. \ref{app:deriv})
\begin{equation}
R_k(l) \,\propto\, \exp\!\big(\beta_r\, v_k(l)\big),
\qquad
R_k(\nullst) \,\propto\, \exp\!\big(\beta_r (1-\kappa_r)\big),
\label{eq:Rupdate}
\end{equation}
\begin{equation}
\msg^{R}_i(l) \;=\; -\sum_{r=1}^{R} \sum_{k \in \mathcal{K}^+_r} \frac{\beta_r\, a^{(r)}_{ik}}{m^{(r)}_k}\, R^{(r)}_k(l) \;+\; \mathrm{const},
\label{eq:Rmsg}
\end{equation}
where the constant is independent of $l$ and cancels in the pixel softmax. The region branch therefore consists of mass-normalized sparse pixel--region aggregation, a softmax over region states, and mass-normalized sparse backprojection to pixels, at $O(RNsL)$ cost per stage.

\begin{proposition}[MAP reduction to normalized robust truncation]
\label{prop:map}
Fix a hard labeling $\x$ and an active region $k$ at level $r$, suppressing the level superscript for readability, and let $p_k(l) = \frac{1}{m_k}\sum_i a_{ik}\, \mathbf{1}[x_i = l]$ denote
the mass-weighted fraction of region $k$ carrying label $l$. Then
\begin{equation}
\min_{z_k} E_{R,k}(\x, z_k) \;=\; \beta_r\, \min\big\{\kappa_r,\; 1 - \max\nolimits_l p_k(l)\big\}.
\label{eq:map}
\end{equation}
\end{proposition}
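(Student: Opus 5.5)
The plan is to compute $E_{R,k}(\x,z_k)$ in closed form for each possible value of $z_k$ and then minimize over the finite state set $\{1,\dots,L\}\cup\{\nullst\}$. By \cref{eq:regionE}, the contribution of an active region $k$ at level $r$ is
\begin{equation*}
E_{R,k}(\x,z_k) = \frac{\beta_r}{m_k}\sum_i a_{ik}\, c_{\kappa_r}(x_i,z_k).
\end{equation*}
Since $\x$ is fixed and $z_k$ appears only in this term, minimizing it is exactly the MAP elimination of $z_k$. Activity gives $m_k \geq m_{\min} > 0$, so the normalization is well defined.

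\textbf{Semantic states.} Take $z_k = l \in \{1,\dots,L\}$. Because $\x$ is hard, $\mathbf{1}[x_i\neq l] = 1-\mathbf{1}[x_i=l]$. Using $\sum_i a_{ik} = m_k$, we get
\begin{equation*}
E_{R,k}(\x,l) = \frac{\beta_r}{m_k}\Big(m_k - \sum_i a_{ik}\mathbf{1}[x_i=l]\Big) = \beta_r\big(1-p_k(l)\big).
\end{equation*}

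\textbf{Null state.} Take $z_k=\nullst$. The cost is the constant $\kappa_r$ for every pixel, so
\begin{equation*}
E_{R,k}(\x,\nullst) = \frac{\beta_r}{m_k}\,\kappa_r\, m_k = \beta_r\kappa_r .
\end{equation*}
This is where mass normalization makes the cap independent of the soft mass $m_k$.

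\textbf{Combining the cases.} Since $\beta_r>0$, minimizing over the semantic states gives $\min_l \beta_r(1-p_k(l)) = \beta_r\big(1-\max_l p_k(l)\big)$. Taking the minimum of this value and the null-state value yields \cref{eq:map}.

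As a remark, the minimizer is a semantic label exactly when $\max_l p_k(l) \geq 1-\kappa_r$ (ties broken arbitrarily). This is the vote-fraction threshold mentioned in the contributions.

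There is no real obstacle here. The only points needing care are that the identity $\sum_i a_{ik}=m_k$ is the definition of region mass, that hardness of $\x$ is what licenses $\mathbf{1}[x_i\neq l]=1-\mathbf{1}[x_i=l]$, and that $\beta_r>0$ lets us pull it out of the minimum.
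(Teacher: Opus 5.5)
Your proposal is correct and matches the paper's proof essentially step for step: evaluate $E_{R,k}$ at each semantic state to get $\beta_r(1-p_k(l))$ and at the null state to get $\beta_r\kappa_r$, then minimize over the finite state set. Your extra bookkeeping makes explicit what the paper's proof leaves implicit: the mass identity $\sum_i a_{ik}=m_k$, the use of hardness of $\mathbf{x}$ to write $\mathbf{1}[x_i\neq l]=1-\mathbf{1}[x_i=l]$, and $\beta_r>0$ to pull the factor out of the minimum.
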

A proof is given in App. \ref{app:deriv}. \cref{eq:map} is a robust $P^N$-style potential over soft regions \citep{kohli2009robust}: the penalty grows with label disagreement and is truncated at $\beta_r \kappa_r$ through the null state. At finite $\beta_r$ the posterior $R_k$ is soft, so $1-\kappa_r$ is the vote fraction at which a semantic state becomes preferred over the null state ($v_k(l) > 1-\kappa_r$), not a deterministic gate.

\subsection{Unrolled damped mean-field inference}
\label{sec:inference}

Inference initializes $Q^{0}_i = \mathrm{softmax}(-\psi^U_i)$, which differs from $\Pinit_i$ through the learned correction, and unrolls $T$ stages indexed $t = 0,\dots,T-1$. Stage $t$ executes, in order,
\begin{align}
R^{t} &\leftarrow \text{\cref{eq:Rupdate} evaluated at } Q^{t} \text{ for all active regions},\\
\widetilde{Q}^{t+1}_i(l) &\propto \exp\!\big[-\psi^U_i(l) - \msg^{P,t}_i(l) - \msg^{R,t}_i(l)\big],\\
\log Q^{t+1}_i(l) &= (1-\alpha_t)\, \log Q^{t}_i(l) + \alpha_t \log \widetilde{Q}^{t+1}_i(l) - \log Z^t_i,
\label{eq:damp}
\end{align}
where $\msg^{P,t}$ and $\msg^{R,t}$ are the messages of \cref{eq:pwmsg,eq:Rmsg} computed from $Q^t$ and $R^t$, and $\alpha_t = \sigma(\eta_t) \in (0,1)$ is a per-stage damping coefficient with learnable $\eta_t$. All energy parameters, embeddings, and incidences are shared across stages; only $\alpha_t$ is stage dependent. $Q^T$ is the final grid-level marginal, $\mathcal{U}(Q^T)$ is the final refined mask, and $R^T$ is formed from $Q^T$ for diagnostics.

\begin{lemma}[Damping preserves fixed points]
\label{lem:fixedpoint}
For every $\alpha \in (0,1]$, the damped update in \cref{eq:damp} and the undamped mean-field update of the conditional energy in \cref{eq:energy} have the same set of joint fixed points $(Q^\ast, R^\ast)$, provided that $Q^\ast$ lies in the interior of the probability simplex.
\end{lemma}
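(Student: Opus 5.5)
Our plan is to define the undamped map $\Phi$ precisely, show that the damped map $\Phi_\alpha$ has the same fixed-point equation, and prove the two inclusions separately. The undamped mean-field map of \cref{eq:energy} sends $Q$ to $R = \mathcal{R}(Q)$ via \cref{eq:Rupdate}, and sends $(Q,R)$ to $\widetilde{Q} = \Phi(Q,R)$ via the softmax of $-\psi^U - \msg^P(Q) - \msg^R(R)$. A joint fixed point of the undamped system is a pair $(Q^\ast,R^\ast)$ with $R^\ast = \mathcal{R}(Q^\ast)$ and $Q^\ast = \Phi(Q^\ast,R^\ast)$. In the damped system, the $R$-update is identical and undamped. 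The $Q$-update is
\[
\log Q^{+} = (1-\alpha)\log Q + \alpha \log \Phi(Q,R) - \log Z .
\]
So the only work is to compare the two $Q$-equations at a common $R^\ast = \mathcal{R}(Q^\ast)$.

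\textbf{Undamped $\Rightarrow$ damped.} Suppose $Q^\ast = \Phi(Q^\ast,R^\ast)$ and $Q^\ast$ is interior, so $\log Q^\ast$ is finite. The right-hand side of the damped update becomes $(1-\alpha)\log Q^\ast + \alpha\log Q^\ast - \log Z = \log Q^\ast - \log Z$. Since $Q^\ast_i$ is normalized, $Z_i = \sum_l Q^\ast_i(l) = 1$, so $Q^{+} = Q^\ast$. The $R$-component is unchanged because $R^\ast = \mathcal{R}(Q^\ast)$ in both systems.

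\textbf{Damped $\Rightarrow$ undamped.} Suppose $Q^\ast$ is fixed by the damped update. Subtract $(1-\alpha)\log Q^\ast_i(l)$ from both sides, which is legitimate because the log is finite by interiority. This gives
\[
\alpha \log Q^\ast_i(l) = \alpha \log \widetilde{Q}_i(l) - \log Z_i \quad \text{for all } l .
\]
Dividing by $\alpha > 0$ yields $Q^\ast_i(l) = c_i\, \widetilde{Q}_i(l)$ with $c_i = Z_i^{-1/\alpha}$ independent of $l$. Both $Q^\ast_i$ and $\widetilde{Q}_i$ lie in the simplex, so summing over $l$ forces $c_i = 1$ and hence $Q^\ast = \Phi(Q^\ast,R^\ast)$. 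When $\alpha = 1$ the two updates coincide, and the case is trivial.

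\textbf{Main obstacle.} The step needing the most care is not algebraic but the bookkeeping of normalizers and of the interiority hypothesis. The normalizer $Z^t_i$ in \cref{eq:damp} is per-site and label-independent; this is exactly why the per-site constant $c_i$ is forced to equal one. Interiority is what allows subtracting $\log Q^\ast$ without $-\infty - (-\infty)$ issues. I would also note, without needing it formally, that $\widetilde{Q}$ is automatically interior because it is a softmax of finite potentials. The label-independent constant in \cref{eq:Rmsg} cancels in this softmax and so does not affect the argument. Finally, I would remark that the statement concerns only the fixed-point sets, not convergence or stability, so no contraction argument is needed.
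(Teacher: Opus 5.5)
Your proposal is correct and follows essentially the same argument as the paper. Both reduce the comparison to the pixel maps at the common $R^\ast=\mathcal{R}(Q^\ast)$ and treat the forward direction as immediate. For the converse, both use interiority to conclude that $\log\widetilde{Q}_i-\log Q^\ast_i$ is constant in $l$, so that normalization forces $\widetilde{Q}_i=Q^\ast_i$.
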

App. \ref{app:deriv} proves \cref{lem:fixedpoint}; the iterates remain in the interior because $\psi^U$ and all messages are finite. Parallel mean field need not monotonically decrease the variational free energy \citep{baque2016principled,krahenbuhl2013parameter}. We therefore report $\freeE(Q^t,R^t)$ and the undamped fixed-point residual $\mathrm{Res}_{\mathrm{FP}}(t) = \frac{1}{N}\sum_i \lVert \widetilde{Q}^{t+1}_i - Q^{t}_i \rVert_1$ as empirical inference diagnostics (\cref{fig:dynamics}).

\subsection{Training and configuration}
\label{sec:training}

Let $\mathcal{U}$ denote bilinear upsampling from $\mathcal{G}$ to the image resolution followed by renormalization, and define $\ell(Q, Y) = \mathcal{L}_{\mathrm{CE}}(\mathcal{U}(Q), Y) + \mathcal{L}_{\mathrm{Dice}}(\mathcal{U}(Q), Y)$ for ground truth $Y$. The refiner is trained with
\begin{equation}
\mathcal{L} \;=\; \ell(Q^T, Y) \;+\; \frac{1}{2 (T-1)} \sum_{t=1}^{T-1} \ell(Q^t, Y),
\label{eq:loss}
\end{equation}
where $Q^0$ is not supervised and $\Pinit$ is resized to $\mathcal{G}$ by bilinear interpolation (App. \ref{app:protocols}).
Because the refiner reads only $(I, \Pinit)$, it applies to any upstream model whose output mask is available.

\section{Experiments}
\label{sec:experiments}

\begin{table}[t]
\setlength{\abovecaptionskip}{1mm}
\centering
\resizebox{\linewidth}{!}{
\setlength{\tabcolsep}{0.6mm}
\begin{tabular}{lccccccc}
\toprule
Method & Resident params & BIG & VOC & DAVIS585 & ECSSD & MSRA-B & Mean \\
\midrule
Unrefined (IoU / BIoU) & n/a & 78.3/70.1 & 66.7/60.1 & 80.1/83.0 & 81.4/70.2 & 75.2/61.9 & 76.3/69.1 \\
\midrule
CascadePSP~\citep{cheng2020cascadepsp} & 68M & +5.0/+6.3 & +1.7/+0.7 & $-$1.3/$-$1.5 & +0.6/+1.0 & +0.9/+2.7 & +1.4/+1.9 \\
SegRefiner~\citep{wang2023segrefiner} & 119M & \textbf{+9.6}/\textbf{+12.5} & $-$3.9/$-$3.1 & $-$10.9/$-$9.1 & $-$15.0/$-$21.4 & $-$10.7/$-$16.2 & $-$6.2/$-$7.4 \\
DualSight~\citep{price2025dualsight} & 641M & +3.9/+4.6 & +3.3/+6.3 & +1.5/$-$0.6 & +2.0/+4.5 & +2.1/+6.8 & +2.6/+4.3 \\
SAMRefiner~\citep{linsamrefiner} & 641M & +6.8/+9.5 & +7.1/+9.7 & +3.3/+2.0 & +5.1/+9.7 & +4.7/+10.4 & +5.4/+8.3 \\
PromptMoE~\citep{price2026promptmoe} & $\geq$641M & +8.5/+11.0 & +7.9/+10.4 & \textbf{+3.6}/\textbf{+2.4} & \textbf{+6.0}/+10.7 & +5.1/\textbf{+10.5} & +6.2/+9.0 \\
\midrule
Attn.\ recurrent-S (matched) & 2.6M & +2.4/+2.9 & +2.7/+3.1 & +0.6/+0.2 & +2.3/+3.6 & +2.1/+3.9 & +2.0/+2.7 \\
UnfoldCRF-S (ours) & 2.6M & +5.1/+6.9 & +5.2/+7.1 & +2.4/+1.6 & +4.3/+7.4 & +4.0/+8.1 & +4.2/+6.2 \\
Attn.\ recurrent-L (matched) & 92M & +6.2/+8.0 & +5.9/+7.4 & +2.1/+1.2 & +4.6/+7.6 & +4.2/+7.8 & +4.6/+6.4 \\
UnfoldCRF-L (ours) & 92M & +8.7/+11.4 & \textbf{+8.1}/\textbf{+10.6} & +3.4/+2.3 & \textbf{+6.0}/\textbf{+10.8} & \textbf{+5.3}/+10.4 & \textbf{+6.3}/\textbf{+9.1} \\
\bottomrule
\end{tabular}}
\caption{\textbf{General open refinement} on the five-benchmark suite of \citet{price2026promptmoe}: $\Delta$IoU / $\Delta$BIoU (\%) over the unrefined masks (absolute scores in the first row). Published baselines use the reported high-resolution variants under the same coarse-mask protocol. Bold: best per column. The last four rows are our runs, reported as means over 5 training seeds.}
\label{tab:general}
\vspace{-3mm}
\end{table}

\begin{table}[t]
\setlength{\abovecaptionskip}{1mm}
\centering
\resizebox{\linewidth}{!}{
\setlength{\tabcolsep}{1.9mm}
\begin{tabular}{lccccc}
\toprule
& & CHAMELEON & CAMO & COD10K & NC4K  \\ \cmidrule(lr){3-6}  
\multicolumn{1}{l}{\multirow{-2}{*}{Method}} & \multicolumn{1}{c}{\multirow{-2}{*}{Source}} & $M$ / $F^\omega_\beta$ / $E_\phi$ / $S_\alpha$ & $M$ / $F^\omega_\beta$ / $E_\phi$ / $S_\alpha$ & $M$ / $F^\omega_\beta$ / $E_\phi$ / $S_\alpha$ & $M$ / $F^\omega_\beta$ / $E_\phi$ / $S_\alpha$ \\
\midrule
SINet V2 & TPAMI'21 & .029/.792/.922/.890 & .071/.733/.875/.822 & .036/.668/.867/.820 & .048/.769/.898/.848 \\
$+$SegRefiner  & NeurIPS'23 & .073/.619/.812/.777 & .116/.583/.763/.736 & .079/.487/.764/.714 & .075/.635/.830/.781 \\
$+$Phoenix & ECCV'26 & .030/.851/.915/.866 & .079/.768/.834/.781 & .034/\textbf{.766}/.872/.818 & .049/\textbf{.822}/.883/.836 \\
$+$UMBD & arXiv'25 & .027/\textbf{.853}/\textbf{.955}/.892 & .067/.774/.885/.829 & .032/.733/.909/.825 & .044/.807/.911/.849 \\
$+$UnfoldCRF-S & Ours & \textbf{.026}/.849/.953/\textbf{.898} & \textbf{.066}/\textbf{.775}/\textbf{.890}/\textbf{.834} & \textbf{.031}/.740/\textbf{.910}/\textbf{.832} & \textbf{.043}/.812/\textbf{.915}/\textbf{.855} \\
\midrule
FEDER & CVPR'23 & .030/.824/.941/.888 & .073/.740/.866/.799 & .032/.713/.899/.822 & .045/.796/.909/.847 \\
$+$SegRefiner & NeurIPS'23 & .041/.751/.908/.850 & .081/.683/.848/.785 & .043/.635/.866/.794 & .050/.742/.896/.832 \\
$+$Phoenix & ECCV'26 & .034/.846/.925/.861 & .081/.760/.818/.773 & .032/\textbf{.779}/.884/.824 & .047/\textbf{.826}/.886/.836 \\
$+$UMBD & arXiv'25 & .028/.838/.949/.890 & .069/.757/.871/.807 & .030/.732/.905/.827 & .043/.809/.912/.849 \\
$+$UnfoldCRF-S & Ours & \textbf{.026}/\textbf{.848}/\textbf{.953}/\textbf{.896} & \textbf{.067}/\textbf{.763}/\textbf{.879}/\textbf{.813} & \textbf{.029}/.745/\textbf{.910}/\textbf{.834} & \textbf{.042}/.815/\textbf{.917}/\textbf{.855} \\
\bottomrule
\end{tabular}}
\caption{\textbf{Concealed target refinement} on COD under the UMBD~\citep{shen2025uncertainty} protocol: $M\downarrow$ / $F^\omega_\beta\uparrow$ / $E_\phi\uparrow$ / $S_\alpha\uparrow$ on four test sets with SINetV2~\citep{fan2021concealed} and FEDER~\citep{he2023feder} upstreams. SegRefiner~\citep{wang2023segrefiner} and UMBD results are quoted from UMBD; Phoenix~\citep{kim2026phoenix} is evaluated zero-shot on the same coarse masks, and our predictions use the same metric implementation. UMBD additionally uses upstream features. }
\label{tab:concealed}
\vspace{-3mm}
\end{table}

\newlength{\qualw}
\newlength{\qualh}
\setlength{\qualw}{0.115\textwidth}
\setlength{\qualh}{0.095\textwidth}      
\newcommand{\qualimg}[1]{\includegraphics[width=\qualw,height=\qualh]{"#1"}}
\newcommand{\rowlabel}[1]{\rotatebox{90}{\scriptsize\makebox[\qualh][c]{#1}}}

\begin{figure*}[t]
\centering
\setlength{\abovecaptionskip}{1mm}
\vspace{-5mm}
\includegraphics[width=\textwidth]{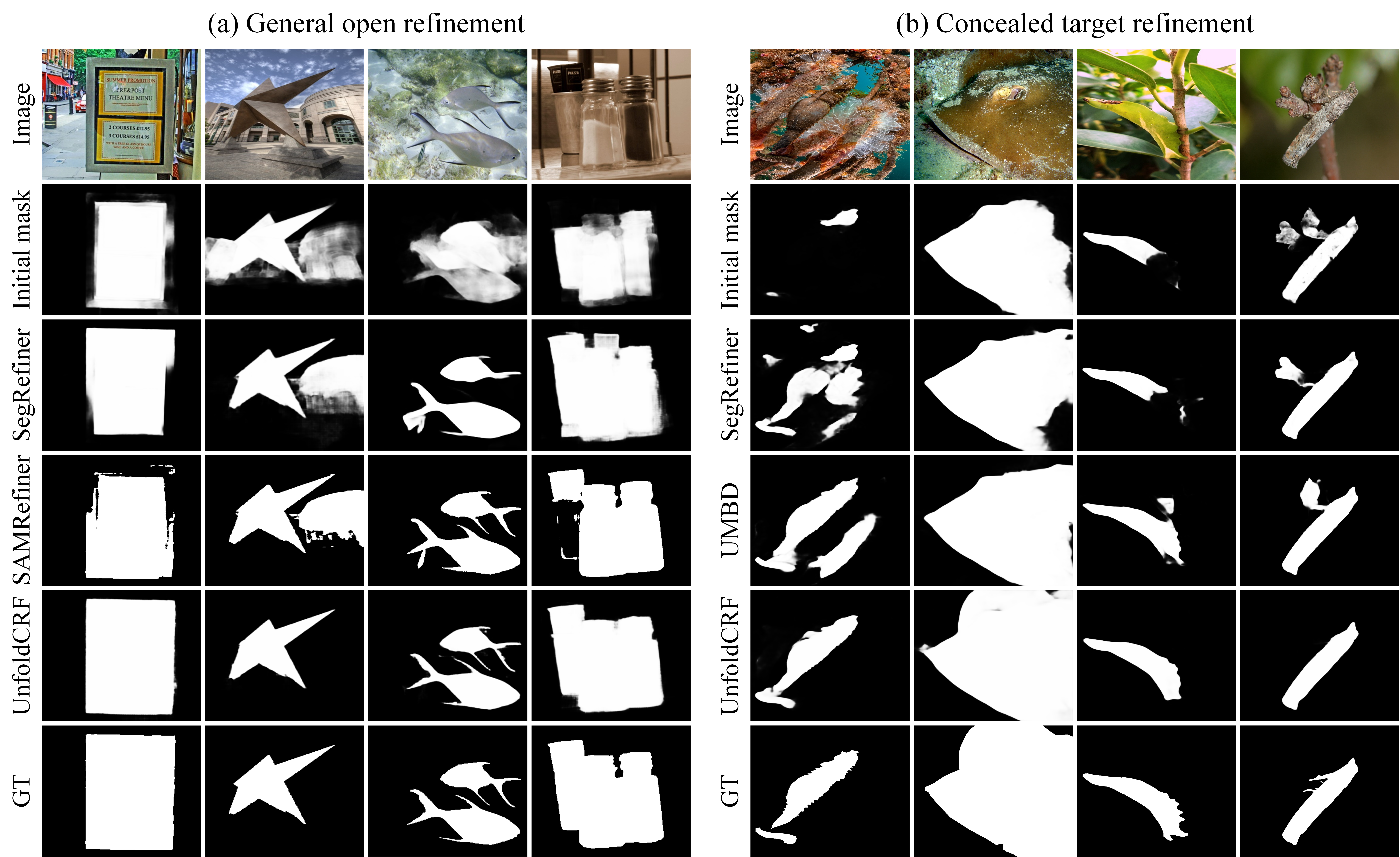}
\vspace{-4.5mm}
\caption{\textbf{Qualitative refinement.}
(a) General open refinement on representative examples.
(b) Concealed target refinement on representative COD examples.
Columns show four examples; rows show the image, initial mask, SegRefiner, SAMRefiner/UMBD, UnfoldCRF, and ground truth.}
\label{fig:qual}
\vspace{-8mm}
\end{figure*}

\subsection{Experimental setup}
\label{sec:exp_setup}

\textbf{Settings.}
\emph{General open refinement} uses the five-benchmark train-once suite of \citet{price2026promptmoe} (BIG \citep{cheng2020cascadepsp}, VOC \citep{everingham2010pascal}, DAVIS585 \citep{chen2022focalclick}, ECSSD~\citep{yan2013hierarchical}, MSRA-B~\citep{liu2010learning}) with its original coarse-mask sources: DeepLabV3 \citep{chen2017rethinking}, FCN-R50 \citep{long2015fully}, and LR-ASPP-MobileNetV3 \citep{howard2019searching} for BIG/VOC; RMBG-1.4/2.0 and BiRefNet-matting for ECSSD and MSRA-B; and the SP and STM masks of DAVIS585. UnfoldCRF and its matched control are trained once on VOC 2012 train \citep{everingham2010pascal} and evaluated frozen; we report $\Delta$IoU and $\Delta$BIoU \citep{cheng2021boundary} over the initial masks. \emph{Concealed refinement} follows UMBD \citep{shen2025uncertainty}: training uses CAMO \citep{le2019anabranch} and COD10K \citep{fan2020camouflaged}, and evaluation uses CHAMELEON \citep{skurowski2018animal}, CAMO, COD10K, and NC4K \citep{lv2021simultaneously}, with SINetV2 \citep{fan2021concealed} and FEDER \citep{he2023feder} as upstream generators and $M$, $F^\omega_\beta$ \citep{margolin2014evaluate}, $E_\phi$ \citep{fan2018enhanced}, and $S_\alpha$ \citep{fan2017structure} as metrics. Unless otherwise stated, controlled ablations and mechanism analyses use COD10K with FEDER as upstream. Further protocol and training details are given in App. \ref{app:protocols}.

\textbf{Model variants and controls.}
UnfoldCRF-S is the standalone model of \cref{sec:method} (custom encoder, 2.6M parameters, $352^2$ input) and is used for all controlled analyses. UnfoldCRF-L keeps the energy and inference operator unchanged but uses a frozen DINOv2-B/14 encoder \citep{oquab2023dinov2} whose intermediate features are fused with a local high-resolution branch at $448^2$ input ($\sim$92M resident and $\sim$6M trainable parameters; App. \ref{app:protocols}). For each variant, the matched attention control uses the same encoder and fusion architectures, with weights trained independently, the same stage count $T$, losses, and supervision, and replaces only the structured operator by $Q^{t+1}=F_{\mathrm{attn}}(h,Q^t)$. For UnfoldCRF-S, the total parameter budget is matched; for UnfoldCRF-L, the frozen DINOv2 backbone is shared and the trainable non-backbone budget is matched.

\subsection{Comparison with state-of-the-art refiners}
\label{sec:sota}

\textbf{General open refinement.}
\Cref{tab:general} reports the five-benchmark suite. Among the published refiners, the SAM-based methods gain most, CascadePSP-Slow gains little, and SegRefiner-HR transfers poorly outside BIG. UnfoldCRF-S gains 4.2 mean $\Delta$IoU and 6.2 mean $\Delta$BIoU with 2.6M resident parameters, 2.2 points in $\Delta$IoU above its matched control (paired 95\% CI $[+1.6,+2.8]$; App.~\ref{app:protocols}). With frozen DINOv2-B features, UnfoldCRF-L reaches 6.3 mean $\Delta$IoU and 9.1 mean $\Delta$BIoU, comparable to PromptMoE on both averages with about one-seventh of the resident parameters, and remains 1.7 $\Delta$IoU points above its matched control (paired 95\% CI $[+1.1,+2.3]$). The 1.7-point gap remains when both models use the same frozen DINOv2 backbone, which rules out backbone strength as the sole explanation. Both variants gain most on BIG and VOC and least on DAVIS585, whose SP and STM initial masks are already accurate.

\textbf{Concealed target refinement.}
\Cref{tab:concealed} reports all four COD metrics because they rank the refiners differently. Phoenix has the highest weighted $F$-measure on COD10K and NC4K but lowers $E_\phi$ or $S_\alpha$ below the upstream mask in several settings, and SegRefiner lowers all four metrics everywhere. UnfoldCRF-S improves all four metrics on all eight upstream--dataset pairs, with the best MAE on all eight, the best $E_\phi$ on seven, and the best $S_\alpha$ on all eight. Unlike UMBD, it uses only the image and the initial mask rather than upstream hidden features; \cref{fig:qual} shows representative corrections of boundary drift, missing structures, and false positives in both settings.

\subsection{Ablation study}
\label{sec:ablation}

\Cref{tab:ablation} evaluates the contributions of the unary correction, the learned pairwise term, the latent regions, and the null state, with parameter counts reported for each variant. Structure controls replace learned interactions or regions with RGB--XY affinities \citep{krahenbuhl2011efficient} or SLIC regions \citep{achanta2012slic,arnab2016higher}. Matched recurrent controls replace structured inference with generic recurrence under the same parameter budget and stage count; the reconstruction-unfolding control alternates reconstruction-state and mask updates in the style of RUN \citep{he2025run} under the same budget (App.~\ref{app:protocols}). The full model improves $F^\omega_\beta$ by 1.0 point over the matched attention control (95\% CI $[+0.6, +1.4]$, App.~\ref{app:protocols}) while reducing Harm from 11.7\% to 8.5\%; replacing learned structure with fixed structure, or removing the null state, lowers every metric.

\begin{table}[t]
\setlength{\abovecaptionskip}{1mm}
\centering
\resizebox{\linewidth}{!}{
\setlength{\tabcolsep}{3.1mm}
\begin{tabular}{lcccccc}
\toprule
Operator & Params & $M\downarrow$ & $F^\omega_\beta\uparrow$ & $E_\phi\uparrow$ & $S_\alpha\uparrow$ & Harm$\downarrow$ \\
\midrule
\multicolumn{7}{l}{\emph{Components}} \\
Upstream (FEDER) & n/a & .032 & .713 & .899 & .822 & n/a \\
Unary only & 2.4M & .031 & .725 $\pm$ .003 & .903 & .826 & 15.6 $\pm$ 0.6 \\
Unary $+$ learned pairwise & 2.5M & .030 & .736 $\pm$ .002 & .906 & .829 & 11.2 $\pm$ 0.5 \\
Unary $+$ learned latent regions & 2.5M & .030 & .738 $\pm$ .003 & .907 & .830 & 10.7 $\pm$ 0.5 \\
Full UnfoldCRF-S ($T{=}5$) & 2.6M & \textbf{.029} & \textbf{.745} $\pm$ .002 & \textbf{.910} & \textbf{.834} & \textbf{8.5 $\pm$ 0.4} \\
Without null state & 2.6M & .031 & .737 $\pm$ .003 & .906 & .829 & 10.1 $\pm$ 0.5 \\
\midrule
\multicolumn{7}{l}{\emph{Structure controls}} \\
Unary $+$ fixed pairwise & 2.4M & .031 & .731 $\pm$ .003 & .904 & .827 & 13.0 $\pm$ 0.6 \\
Unary $+$ fixed SLIC regions & 2.4M & .031 & .732 $\pm$ .004 & .905 & .828 & 13.5 $\pm$ 0.7 \\
\midrule
\multicolumn{7}{l}{\emph{Black-box and formulation controls}} \\
Conv recurrent (matched) & 2.6M & .031 & .733 $\pm$ .004 & .905 & .828 & 12.3 $\pm$ 0.6 \\
Attention recurrent (matched) & 2.6M & .030 & .735 $\pm$ .003 & .906 & .829 & 11.7 $\pm$ 0.5 \\
Reconstruction-unfolding control (RUN-style) & 2.6M & .032 & .712 $\pm$ .005 & .885 & .808 & 10.9 $\pm$ 0.5 \\
\bottomrule
\end{tabular}}
\caption{\textbf{Controlled comparison} on COD10K with FEDER upstream (5 seeds; SD for $F^\omega_\beta$ and Harm). Component ablations share the encoder and training setting; structure controls use predefined pairwise or region structure; recurrent controls match the stage count and parameter budget. Harm is the percentage of images whose IoU decreases after refinement.}
\label{tab:ablation}
\vspace{-3mm}
\end{table}
\begin{figure}[t]
\centering
\includegraphics[width=\linewidth]{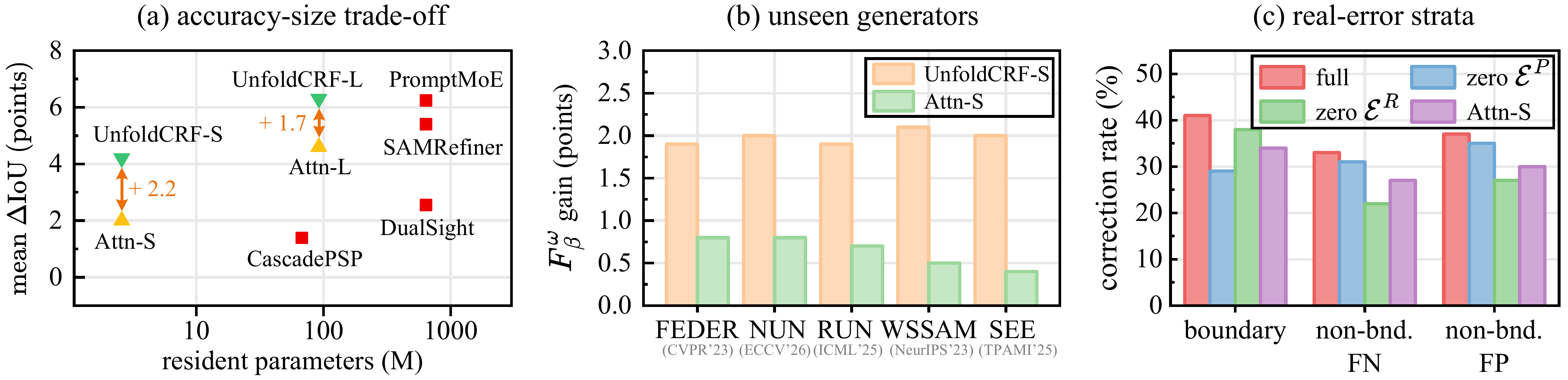}
\vspace{-7.5mm}
\caption{\textbf{Efficiency, generalization, and mechanism.}
(a) Mean $\Delta$IoU versus resident parameters; arrows indicate gains over matched recurrent controls at both scales.
(b) $F^\omega_\beta$ gains on held-out generators.
(c) Correction rates by error type under message interventions.}
\label{fig:generalization}
\vspace{-3mm}
\end{figure}
\subsection{Further analysis}
\label{sec:analysis}

\textbf{Generalization and mechanism.}
\Cref{fig:generalization}a plots the accuracy--parameter trade-off of \cref{tab:general}; panels (b,c) turn to generalization across generators and to mechanism. \Cref{fig:generalization}b evaluates unseen-generator generalization on COD with FEDER, NUN \citep{he2025nested}, RUN \citep{he2025run}, WSSAM \citep{he2023weakly}, and SEE \citep{he2025segment}. For each held-out generator, the refiner is trained on the remaining generators and evaluated frozen, comparing the $F^\omega_\beta$ gain of UnfoldCRF-S and Attn-S (paired difference 1.3 points, 95\% CI $[+0.9,+1.9]$). \Cref{fig:generalization}c partitions initially wrong pixels into boundary errors, non-boundary false negatives, and non-boundary false positives (definitions in App.~\ref{app:protocols}), and reports the fraction corrected after refinement. Zeroing the pairwise messages reduces boundary correction by 12 points but changes non-boundary FN/FP correction by only 2 points, whereas zeroing the region messages reduces non-boundary FN/FP correction by 11/10 points but boundary correction by only 3 points; the two branches act on different error types.

\begin{figure}[t]
\centering
\includegraphics[width=\linewidth]{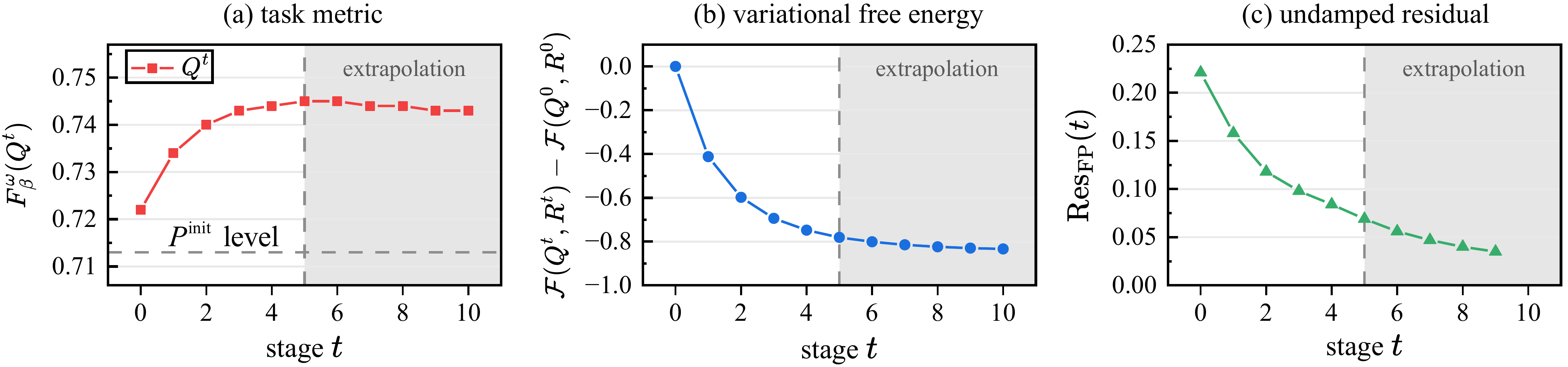}
\vspace{-7.5mm}
\caption{\textbf{Stage-wise inference dynamics} of one default checkpoint ($T{=}5$) on COD10K: (a) $F^\omega_\beta$ of intermediate marginals with the initial-mask level as reference, (b) variational free energy relative to $Q^0$ (test-set mean of the per-image difference), and (c) undamped fixed-point residual. Vertical dashed line: trained depth; shading: test-time extrapolation beyond $T$ with $\alpha_t:=\alpha_{T-1}$. Residuals are recorded before each update for $t = 0, \ldots, 2T-1$.}
\label{fig:dynamics}
\vspace{-3mm}
\end{figure}

\newlength{\failw}
\newlength{\failh}
\begin{figure}[h]
\centering
\setlength{\abovecaptionskip}{1mm}
\vspace{-1.5mm}
\includegraphics[width=\linewidth]{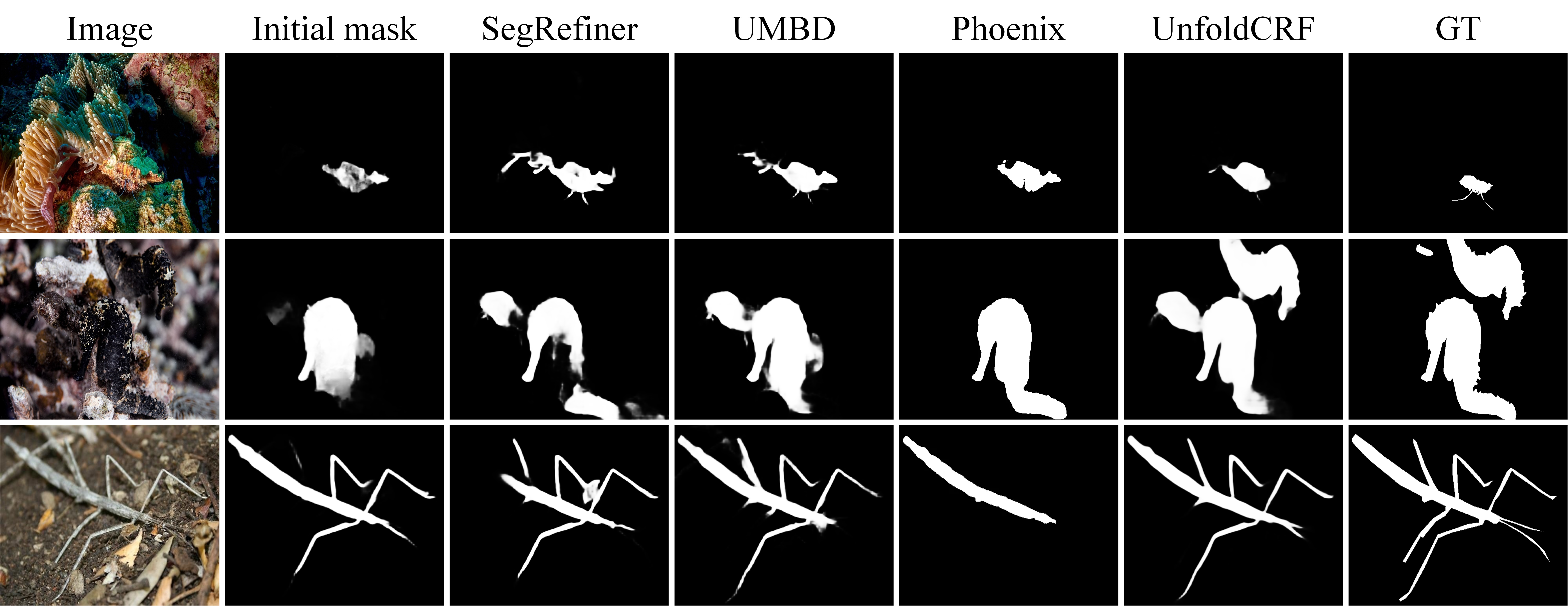}
\vspace{-4.5mm}
\caption{\textbf{Representative failure cases} with FEDER upstream. Columns show the image, initial mask, SegRefiner, UMBD, Phoenix, UnfoldCRF-S, and ground truth. Rows show a confidently mislabeled region, a missing object part, and a thin structure that remains difficult to recover.}
\label{fig:failure}\vspace{-4.5mm}
\end{figure}

\textbf{Stage-wise inference dynamics.}
\Cref{fig:dynamics} follows one default checkpoint through $Q^0,\ldots,Q^T$ and, beyond the trained depth, through test-time extrapolation to $Q^{2T}$ with $\alpha_t:=\alpha_{T-1}$. The gap between $\Pinit$ and $Q^0=\mathrm{softmax}(-\psi^U)$ is the learned unary correction, before any structured iteration. The task metric peaks near the trained depth and drifts slightly downward under further extrapolation, while the free energy and the undamped fixed-point residual keep decreasing beyond it.

\vspace{-1mm}
\section{Discussion and limitations}\vspace{-1mm}
\label{sec:discussion}

\textbf{Structured inference beyond representation strength.}
The controlled comparisons separate two things that a black-box refiner entangles: which pixels and regions interact, which the encoder decides, and how labels change under those interactions, which the shared energy decides. A 1.7-point gap survives when both models sit on the same frozen DINOv2 backbone, so the backbone cannot be the whole story. Zeroing the pairwise or region messages affects different error types. For the checkpoint examined in \cref{fig:dynamics}, segmentation accuracy peaks near the trained depth despite continued decreases in variational free energy.

\textbf{Limitations.}
UnfoldCRF still depends on the quality of the upstream mask and on the resolution of its stride-4 inference grid: confidently wrong regions, missing object parts, and very thin structures are often not recovered (\cref{fig:failure}). UnfoldCRF-L also inherits the resident parameter cost of its DINOv2 backbone. Except for the train-once setting in \cref{sec:sota}, we train a separate refiner for each task, and our experiments are limited to binary masks although the formulation supports $L$ labels; multi-class, instance-level, and temporal settings remain future work.

\vspace{-1mm}
\section{Conclusion} \vspace{-1mm}

UnfoldCRF refines masks by unrolling damped mean-field inference in one image-conditioned segmentation energy. Matched recurrent controls, held-out generators, and message interventions separate its gain from parameter count and backbone strength, and they show that the pairwise term mostly corrects boundary errors while the region term mostly corrects non-boundary ones.

\newpage

\subsection*{AI use statement}
Generative AI tools were used to assist with editing and revising the manuscript. The authors reviewed all AI-assisted changes and take full responsibility for the final content.

\subsection*{Ethics statement}

This work uses publicly available benchmark datasets under their respective terms of use and involves no human subjects, personal data collection, or annotation by the authors. As a mask refiner, UnfoldCRF can inherit errors from the upstream segmenter and may also worsen individual predictions; we therefore report image-harm rates alongside aggregate accuracy. The method should be validated for the target domain before use in safety-relevant applications.

\subsection*{Reproducibility statement}

The conditional energy, mean-field updates, and damping scheme are specified in \cref{sec:method}, with derivations and proofs in App. \ref{app:deriv}. We also provide input provenance, evaluation conventions, implementation and training details, and the construction of the matched controls in App. \ref{app:protocols}.

\bibliography{main}

@String(CVPR= {IEEE Conf. Comput. Vis. Pattern Recog.})

@String(ICCV= {Int. Conf. Comput. Vis.})

@String(ECCV= {Eur. Conf. Comput. Vis.})

@String(BMVC= {Brit. Mach. Vis. Conf.})

@String(ICLR = {Int. Conf. Learn. Represent.})

@String(CVPR  = {CVPR})

@String(ICCV  = {ICCV})

@String(ECCV  = {ECCV})

@String(BMVC  =	{BMVC})

@String(ICLR  = {ICLR})

@inproceedings{cheng2020cascadepsp,
  title={Cascadepsp: Toward class-agnostic and very high-resolution segmentation via global and local refinement},
  author={Cheng, Ho Kei and Chung, Jihoon and Tai, Yu-Wing and Tang, Chi-Keung},
  booktitle={CVPR},
  pages={8890--8899},
  year={2020}
}

@inproceedings{yuan2020segfix,
  title={Segfix: Model-agnostic boundary refinement for segmentation},
  author={Yuan, Yuhui and Xie, Jingyi and Chen, Xilin and Wang, Jingdong},
  booktitle={ECCV},
  pages={489--506},
  year={2020}
}

@inproceedings{arnab2016higher,
  title={Higher order conditional random fields in deep neural networks},
  author={Arnab, Anurag and Jayasumana, Sadeep and Zheng, Shuai and Torr, Philip HS},
  booktitle={ECCV},
  pages={524--540},
  year={2016},
  organization={Springer}
}

@article{teichmann2018convolutional,
  title={Convolutional CRFs for semantic segmentation},
  author={Teichmann, Marvin TT and Cipolla, Roberto},
  journal={BMVC},
  year={2018}
}

@article{he2025nested,
  title={Nested unfolding network for real-world concealed object segmentation},
  author={He, Chunming and Zhang, Rihan and Zhang, Dingming and Xiao, Fengyang and Fan, Deng-Ping and Farsiu, Sina},
  journal={ECCV},
  year={2026}
}

@inproceedings{fan2017structure,
  title={Structure-measure: A new way to evaluate foreground maps},
  author={Fan, Deng-Ping and Cheng, Ming-Ming and Liu, Yun and Li, Tao},
  booktitle={ICCV},
  pages={4548--4557},
  year={2017}
}

@inproceedings{margolin2014evaluate,
  title={How to evaluate foreground maps?},
  author={Margolin, Ran and Zelnik-Manor, Lihi and Tal, Ayellet},
  booktitle={CVPR},
  pages={248--255},
  year={2014}
}

@article{le2019anabranch,
  title={Anabranch network for camouflaged object segmentation},
  author={Le, Trung-Nghia and Nguyen, Tam V and Nie, Zhongliang and Tran, Minh-Triet and Sugimoto, Akihiro},
  journal={Comput. Vis. Image Underst.},
  volume={184},
  pages={45--56},
  year={2019},
  publisher={Elsevier}
}

@inproceedings{fan2020camouflaged,
  title={Camouflaged object detection},
  author={Fan, Deng-Ping and Ji, Ge-Peng and Sun, Guolei and Cheng, Ming-Ming and Shen, Jianbing},
  booktitle={CVPR},
  pages={2777--2787},
  year={2020}
}

@article{skurowski2018animal,
  title={Animal camouflage analysis: Chameleon database},
  author={Skurowski, Przemys{\l}aw and Abdulameer, Hassan and B{\l}aszczyk, J},
  journal={Unpublished manuscript},
  pages={7},
  year={2018}
}

@article{vineet2014filter,
  title={Filter-based mean-field inference for random fields with higher-order terms and product label-spaces},
  author={Vineet, Vibhav and Warrell, Jonathan and Torr, Philip HS},
  journal={Int. J. Comput. Vis},
  volume={110},
  number={3},
  pages={290--307},
  year={2014},
  publisher={Springer}
}

@inproceedings{linsamrefiner,
  title={Samrefiner: Taming segment anything model for universal mask refinement},
  author={Lin, Yuqi and Li, Hengjia and Shao, Wenqi and Yang, Zheng and Zhao, Jun and He, Xiaofei and Luo, Ping and Zhang, Kaipeng},
  booktitle={ICLR},
  pages={57368--57392},
  year={2025}
}

@article{fan2018enhanced,
  title={Enhanced-alignment measure for binary foreground map evaluation},
  author={Fan, Deng-Ping and Gong, Cheng and Cao, Yang and Ren, Bo and Cheng, Ming-Ming and Borji, Ali},
  journal={arXiv preprint arXiv:1805.10421},
  year={2018}
}

@article{wang2023segrefiner,
  title={SegRefiner: Towards Model-Agnostic Segmentation Refinement with Discrete Diffusion Process},
  author={Wang, Mengyu and Ding, Henghui and Liew, Jun Hao and Liu, Jiajun and Zhao, Yao and Wei, Yunchao},
  journal={NeurIPS},
  volume={36},
  pages={79761--79780},
  year={2023}
}

@article{kohli2009robust,
  title={Robust higher order potentials for enforcing label consistency},
  author={Kohli, Pushmeet and Ladick{\`y}, L’ubor and Torr, Philip HS},
  journal={Int. J. Comput. Vis.},
  volume={82},
  number={3},
  pages={302--324},
  year={2009},
  publisher={Springer}
}

@inproceedings{zheng2015conditional,
  title={Conditional random fields as recurrent neural networks},
  author={Zheng, Shuai and Jayasumana, Sadeep and Romera-Paredes, Bernardino and Vineet, Vibhav and Su, Zhizhong and Du, Dalong and Huang, Chang and Torr, Philip HS},
  booktitle={ICCV},
  pages={1529--1537},
  year={2015}
}

@article{krahenbuhl2011efficient,
  title={Efficient inference in fully connected crfs with gaussian edge potentials},
  author={Kr{\"a}henb{\"u}hl, Philipp and Koltun, Vladlen},
  journal={NeurIPS},
  volume={24},
  year={2011}
}

@article{locatello2020object,
  title={Object-centric learning with slot attention},
  author={Locatello, Francesco and Weissenborn, Dirk and Unterthiner, Thomas and Mahendran, Aravindh and Heigold, Georg and Uszkoreit, Jakob and Dosovitskiy, Alexey and Kipf, Thomas},
  journal={NeurIPS},
  volume={33},
  pages={11525--11538},
  year={2020}
}

@article{everingham2010pascal,
  title={The pascal visual object classes (voc) challenge},
  author={Everingham, Mark and Van Gool, Luc and Williams, Christopher KI and Winn, John and Zisserman, Andrew},
  journal={Int. J. Comput. Vis.},
  volume={88},
  pages={303--338},
  year={2010},
  publisher={Springer}
}

@article{chen2017rethinking,
  title={Rethinking atrous convolution for semantic image segmentation},
  author={Chen, Liang-Chieh and Papandreou, George and Schroff, Florian and Adam, Hartwig},
  journal={arXiv preprint arXiv:1706.05587},
  year={2017}
}

@inproceedings{long2015fully,
  title={Fully convolutional networks for semantic segmentation},
  author={Long, Jonathan and Shelhamer, Evan and Darrell, Trevor},
  booktitle={CVPR},
  pages={3431--3440},
  year={2015}
}

@inproceedings{howard2019searching,
  title={Searching for mobilenetv3},
  author={Howard, Andrew and Sandler, Mark and Chu, Grace and Chen, Liang-Chieh and Chen, Bo and Tan, Mingxing and Wang, Weijun and Zhu, Yukun and Pang, Ruoming and Vasudevan, Vijay and others},
  booktitle={ICCV},
  pages={1314--1324},
  year={2019}
}

@inproceedings{cheng2021boundary,
  title={Boundary IoU: Improving object-centric image segmentation evaluation},
  author={Cheng, Bowen and Girshick, Ross and Doll{\'a}r, Piotr and Berg, Alexander C and Kirillov, Alexander},
  booktitle={CVPR},
  pages={15329--15337},
  year={2021},
  organization={IEEE}
}

@article{fan2021concealed,
  title={Concealed object detection},
  author={Fan, Deng-Ping and Ji, Ge-Peng and Cheng, Ming-Ming and Shao, Ling},
  journal={IEEE transactions on pattern analysis and machine intelligence},
  volume={44},
  number={10},
  pages={6024--6042},
  year={2021},
  publisher={IEEE}
}

@inproceedings{lv2021simultaneously,
  title={Simultaneously localize, segment and rank the camouflaged objects},
  author={Lv, Yunqiu and Zhang, Jing and Dai, Yuchao and Li, Aixuan and Liu, Bowen and Barnes, Nick and Fan, Deng-Ping},
  booktitle={CVPR},
  pages={11591--11601},
  year={2021}
}

@inproceedings{chen2022focalclick,
  title={Focalclick: Towards practical interactive image segmentation},
  author={Chen, Xi and Zhao, Zhiyan and Zhang, Yilei and Duan, Manni and Qi, Donglian and Zhao, Hengshuang},
  booktitle={CVPR},
  pages={1290--1299},
  year={2022},
  organization={IEEE}
}

@article{price2025dualsight,
  title={DualSight: multi-stage instance segmentation framework for improved precision},
  author={Price, Stephen and Judd, Kiran and Tsaknopoulos, Kyle and Neamtu, Rodica and Cote, Danielle L},
  journal={Sci. Rep.},
  volume={15},
  number={1},
  pages={27521},
  year={2025}
}

@inproceedings{yan2013hierarchical,
  title={Hierarchical saliency detection},
  author={Yan, Qiong and Xu, Li and Shi, Jianping and Jia, Jiaya},
  booktitle={CVPR},
  pages={1155--1162},
  year={2013}
}

@article{liu2010learning,
  title={Learning to detect a salient object},
  author={Liu, Tie and Yuan, Zejian and Sun, Jian and Wang, Jingdong and Zheng, Nanning and Tang, Xiaoou and Shum, Heung-Yeung},
  journal={IEEE Trans. Pattern Anal. Mach. Intell.},
  volume={33},
  number={2},
  pages={353--367},
  year={2010},
  publisher={IEEE}
}

@article{he2023weakly,
  title={Weakly-supervised concealed object segmentation with sam-based pseudo labeling and multi-scale feature grouping},
  author={He, Chunming and Li, Kai and Zhang, Yachao and Xu, Guoxia and Tang, Longxiang and Zhang, Yulun and Guo, Zhenhua and Li, Xiu},
  journal={NeurIPS},
  volume={36},
  year={2023}
}

@article{he2025run,
  title={Run: Reversible unfolding network for concealed object segmentation},
  author={He, Chunming and Zhang, Rihan and Xiao, Fengyang and Fang, Chengyu and Tang, Longxiang and Zhang, Yulun and Kong, Linghe and Fan, Deng-Ping and Li, Kai and Farsiu, Sina},
  journal={ICML},
  year={2025}
}

@article{he2025segment,
  title={Segment concealed object with incomplete supervision},
  author={He, Chunming and Li, Kai and Zhang, Yachao and Yang, Ziyun and Tang, Longxiang and Zhang, Yulun and Kong, Linghe and Farsiu, Sina},
  journal={IEEE Trans. Pattern Anal. Mach. Intell.},
  year={2025}
}

@inproceedings{xie2020trans10k,
  title={Segmenting transparent objects in the wild},
  author={Xie, Enze and Wang, Wenjia and Wang, Wenhai and Ding, Mingyu and Shen, Chunhua and Luo, Ping},
  booktitle={ECCV},
  pages={696--711},
  year={2020}
}

@inproceedings{he2023feder,
  title={Camouflaged object detection with feature decomposition and edge reconstruction},
  author={He, Chunming and Li, Kai and Zhang, Yachao and Tang, Longxiang and Zhang, Yulun and Guo, Zhenhua and Li, Xiu},
  booktitle={CVPR},
  pages={22046--22055},
  year={2023},
  organization={IEEE}
}

@inproceedings{ladicky2009associative,
  title={Associative hierarchical crfs for object class image segmentation},
  author={Ladick{\`y}, L'ubor and Russell, Chris and Kohli, Pushmeet and Torr, Philip HS},
  booktitle={ICCV},
  pages={739--746},
  year={2009},
  organization={IEEE}
}

@inproceedings{baque2016principled,
  title={Principled parallel mean-field inference for discrete random fields},
  author={Baqu{\'e}, Pierre and Bagautdinov, Timur and Fleuret, Fran{\c{c}}ois and Fua, Pascal},
  booktitle={CVPR},
  pages={5848--5857},
  year={2016}
}

@inproceedings{krahenbuhl2013parameter,
  title={Parameter learning and convergent inference for dense random fields},
  author={Kr{\"a}henb{\"u}hl, Philipp and Koltun, Vladlen},
  booktitle={ICML},
  pages={513--521},
  year={2013},
  organization={PMLR}
}

@inproceedings{jampani2018superpixel,
  title={Superpixel sampling networks},
  author={Jampani, Varun and Sun, Deqing and Liu, Ming-Yu and Yang, Ming-Hsuan and Kautz, Jan},
  booktitle={ECCV},
  pages={363--380},
  year={2018},
  organization={Springer}
}

@inproceedings{su2019pixel,
  title={Pixel-adaptive convolutional neural networks},
  author={Su, Hang and Jampani, Varun and Sun, Deqing and Gallo, Orazio and Learned-Miller, Erik and Kautz, Jan},
  booktitle={CVPR},
  pages={11158--11167},
  year={2019},
  organization={IEEE}
}

@inproceedings{jampani2016learning,
  title={Learning sparse high dimensional filters: Image filtering, dense crfs and bilateral neural networks},
  author={Jampani, Varun and Kiefel, Martin and Gehler, Peter V},
  booktitle={CVPR},
  pages={4452--4461},
  year={2016}
}

@inproceedings{chandra2016fast,
  title={Fast, exact and multi-scale inference for semantic image segmentation with deep gaussian crfs},
  author={Chandra, Siddhartha and Kokkinos, Iasonas},
  booktitle={ECCV},
  pages={402--418},
  year={2016},
  organization={Springer}
}

@inproceedings{price2026promptmoe,
  title={PromptMoE: A Segmentation Refinement Framework Leveraging Mixture of Experts for Improved Prompting},
  author={Price, Stephen and Cote, Danielle L and Rundensteiner, Elke A},
  booktitle={CVPR},
  pages={6325--6335},
  year={2026}
}

@article{ke2023hqsam,
  title={Segment anything in high quality},
  author={Ke, Lei and Ye, Mingqiao and Danelljan, Martin and Tai, Yu-Wing and Tang, Chi-Keung and Yu, Fisher and others},
  journal={NeurIPS},
  volume={36},
  pages={29914--29934},
  year={2023}
}

@inproceedings{kirillov2023segment,
  title={Segment anything},
  author={Kirillov, Alexander and Mintun, Eric and Ravi, Nikhila and Mao, Hanzi and Rolland, Chloe and Gustafson, Laura and Xiao, Tete and Whitehead, Spencer and Berg, Alexander C and Lo, Wan-Yen and others},
  booktitle={ICCV},
  pages={3992--4003},
  year={2023},
  organization={IEEE}
}

@article{achanta2012slic,
  title={SLIC superpixels compared to state-of-the-art superpixel methods},
  author={Achanta, Radhakrishna and Shaji, Appu and Smith, Kevin and Lucchi, Aurelien and Fua, Pascal and S{\"u}sstrunk, Sabine},
  journal={IEEE Trans. Pattern Anal. Mach. Intell.},
  volume={34},
  number={11},
  pages={2274--2282},
  year={2012},
  publisher={IEEE}
}

@article{oquab2023dinov2,
  title={Dinov2: Learning robust visual features without supervision},
  author={Oquab, Maxime and Darcet, Timoth{\'e}e and Moutakanni, Th{\'e}o and Vo, Huy and Szafraniec, Marc and Khalidov, Vasil and Fernandez, Pierre and Haziza, Daniel and Massa, Francisco and El-Nouby, Alaaeldin and others},
  journal={arXiv preprint arXiv:2304.07193},
  year={2023}
}

@article{shen2025uncertainty,
  title={Uncertainty-masked bernoulli diffusion for camouflaged object detection refinement},
  author={Shen, Yuqi and Xiao, Fengyang and Hu, Sujie and Pang, Youwei and Pu, Yifan and Fang, Chengyu and Li, Xiu and He, Chunming},
  journal={arXiv preprint arXiv:2506.10712},
  year={2025}
}

@article{lehuu2021regularized,
  title={Regularized frank-wolfe for dense crfs: Generalizing mean field and beyond},
  author={L{\^e}-Huu, {\DJ} Khu{\^e} and Alahari, Karteek},
  journal={NeurIPS},
  volume={34},
  pages={1453--1467},
  year={2021}
}

@article{kim2026phoenix,
  title={Learning from Adversity: Semantic-Aware Mask Refinement through Adversarial Perturbation},
  author={Kim, Beomyoung and Hwang, Sung Ju},
  journal={ECCV},
  year={2026}
}

@inproceedings{juybari2026differentiable,
  title={Differentiable Laplacian Matrix Guided Superpixel Segmentation},
  author={Juybari, Jeremy and Hamilton, Josh and Das, Shuvra and Chen, Chaofan and Khalil, Andre and Zhu, Yifeng},
  booktitle={CVPR},
  pages={36168--36178},
  year={2026}
}
\bibliographystyle{iclr2027_conference}

\newpage
\appendix
\setcounter{figure}{0}
\renewcommand{\figurename}{Fig.}
\renewcommand{\thefigure}{S\arabic{figure}}

\setcounter{table}{0}
\renewcommand{\tablename}{Table}
\renewcommand{\thetable}{S\arabic{table}}

\setcounter{equation}{0}
\renewcommand{\theequation}{S\arabic{equation}}

\FloatBarrier
\section{Derivations}
\label{app:deriv}

\textbf{Notation.} Throughout, expectations are taken under the mean-field factorization $q(\x,\z)=\prod_i Q_i(x_i)\prod_r\prod_{k\in\mathcal K_r^+}R_k^{(r)}(z_k^{(r)})$. For fixed $(I,\Pinit)$, all quantities defining the conditional energy, including the incidences $a_{ik}$, region masses $m_k$, and active-region sets $\mathcal K_r^+$, are held fixed throughout inference. Since only active regions are considered, $m_k\ge m_{\min}>0$. The level index $(r)$ is suppressed when a single level is considered. For coordinate updates, conditional energies are written up to additive constants independent of the conditioned label, which cancel in the softmax; the full expectation is retained when evaluating the variational free energy. All derivations use the mass-normalized energy of \cref{eq:regionE}, with $v_k(l)=\frac{1}{m_k}\sum_i a_{ik}Q_i(l)$.

\textbf{Region update.} Conditioning the region energy of an active region $k$ on $z_k=l$ for a semantic label $l$ gives
\begin{equation}
\mathbb{E}\!\left[E_{R,k}\mid z_k=l\right]=\frac{\beta_r}{m_k}\sum_i a_{ik}\bigl(1-Q_i(l)\bigr)=\beta_r\bigl(1-v_k(l)\bigr),
\end{equation}
while conditioning on $z_k=\nullst$ gives
\begin{equation}
\mathbb{E}\!\left[E_{R,k}\mid z_k=\nullst\right]=\frac{\beta_r}{m_k}\kappa_r\sum_i a_{ik}=\beta_r\kappa_r.
\end{equation}
Thus $R_k(\cdot)\propto\exp\!\left(-\mathbb{E}[E_{R,k}\mid\cdot]\right)$. Removing the common factor $\exp(-\beta_r)$ from all unnormalized state probabilities yields $R_k(l)\propto\exp\!\bigl(\beta_r v_k(l)\bigr)$ and $R_k(\nullst)\propto\exp\!\bigl(\beta_r(1-\kappa_r)\bigr)$, which is \cref{eq:Rupdate}.

\textbf{Pixel message.} Conditioning on $x_i=l$ and using $\sum_{s=1}^{L}R_k(s)\mathbf{1}[l\neq s]=1-R_k(\nullst)-R_k(l)$, we obtain
\begin{equation}
\mathbb{E}\!\left[E_{R,k}\mid x_i=l\right]=\frac{\beta_r a_{ik}}{m_k}\Big[1-R_k(\nullst)-R_k(l)+\kappa_rR_k(\nullst)\Big]+\mathrm{const}.
\end{equation}
The only $l$-dependent term is $-\frac{\beta_r a_{ik}}{m_k}R_k(l)$; summing over levels and active regions gives \cref{eq:Rmsg}. The null state's direct contribution is independent of the pixel label and therefore cancels in the pixel softmax. Its effect on the region message is mediated through the semantic probabilities $R_k(l)$, whose total mass is $1-R_k(\nullst)$.

\textbf{Expected region energy.} For one active region,
\begin{align}
\mathbb{E}_q[E_{R,k}]&=\beta_r\left[\sum_{l=1}^{L}R_k(l)\bigl(1-v_k(l)\bigr)+\kappa_rR_k(\nullst)\right]\nonumber\\
&=\beta_r\left[1-\sum_{l=1}^{L}R_k(l)v_k(l)-(1-\kappa_r)R_k(\nullst)\right].
\end{align}
Hence, over all levels and active regions,
\begin{equation}
\mathbb{E}_q[E_R]=\sum_{r=1}^{R}\sum_{k\in\mathcal K_r^+}\beta_r\left[1-\sum_{l=1}^{L}R_k^{(r)}(l)v_k^{(r)}(l)-(1-\kappa_r)R_k^{(r)}(\nullst)\right].
\label{eq:expR}
\end{equation}
We retain the full expectation so that the reported free energy corresponds to the conditional energy in \cref{eq:energy}.

\textbf{Expected pairwise energy.} The pairwise expectation is
\begin{equation}
\mathbb{E}_q[E_P]=\frac{1}{2}\sum_{i\neq j}K_{ij}\sum_{l,l'}Q_i(l)\,\mu(l,l')\,Q_j(l').
\end{equation}
The factor $\frac{1}{2}$ converts the unordered-pair summation of \cref{eq:pairwise} into a double sum, while the self term is excluded. The symmetries $K_{ij}=K_{ji}$ and $\mu(l,l')=\mu(l',l)$ ensure that the unordered-pair energy yields the full-neighborhood message in \cref{eq:pwmsg}.

\textbf{Proof of \cref{prop:map}.} For hard $\x$, $E_{R,k}(\x,z_k{=}l)=\frac{\beta_r}{m_k}\sum_i a_{ik}\mathbf{1}[x_i\neq l]=\beta_r\bigl(1-p_k(l)\bigr)$, whereas $E_{R,k}(\x,z_k{=}\nullst)=\beta_r\kappa_r$. Minimizing over $z_k$ therefore selects the smaller of $\kappa_r$ and $1-\max_l p_k(l)$, scaled by $\beta_r$, which gives \cref{eq:map}. \hfill$\square$

\textbf{Proof of \cref{lem:fixedpoint}.} Write one undamped mean-field iteration as $R=\mathcal R(Q)$, using \cref{eq:Rupdate}, followed by $\widetilde Q=\mathcal T(Q,R)$.
The damped iteration uses the same two maps and then applies \cref{eq:damp}. Since the region map is identical in both systems, a joint fixed point $(Q^\ast,R^\ast)$ satisfies $R^\ast=\mathcal R(Q^\ast)$, so it remains to compare the pixel maps at $R^\ast$.

If $\widetilde Q=Q$, then \cref{eq:damp} returns $Q^{t+1}=Q$ for every $\alpha\in(0,1]$. Conversely, let $Q$ lie in the interior of the simplex and fix $\alpha\in(0,1]$. Suppose \cref{eq:damp} returns $Q^{t+1}=Q$. Then, for every pixel $i$, $\alpha\bigl(\log \widetilde Q_i(l)-\log Q_i(l)\bigr)$ is constant in $l$. Hence $\widetilde Q_i\propto Q_i$, and because both are normalized distributions, $\widetilde Q_i=Q_i$. The damped and undamped systems therefore have the same interior joint fixed points. The argument applies to each stage-specific coefficient $\alpha_t\in(0,1]$. \hfill$\square$

\textbf{Variational free energy.} The variational free energy is
\begin{align}
\freeE(Q,R)=&\;\mathbb{E}_q[E_U]+\mathbb{E}_q[E_P]+\mathbb{E}_q[E_R]\nonumber\\
&+\sum_i\sum_l Q_i(l)\log Q_i(l)+\sum_{r=1}^{R}\sum_{k\in\mathcal K_r^+}\sum_{u\in\{1,\dots,L,\nullst\}}R_k^{(r)}(u)\log R_k^{(r)}(u),
\end{align}
where $\mathbb{E}_q[E_U]=\sum_i\sum_l Q_i(l)\psi_i^U(l)$ and the pairwise and region expectations are given above. For the Gibbs distribution $\gibbs$ induced by the conditional energy, $\freeE(Q,R)=\mathrm{KL}(q\,\|\,\gibbs)-\log Z$.

\textbf{Inference diagnostics.} Because the input-conditioned energy terms, incidences, and active-region sets are fixed across stages, $\freeE(Q^t,R^t)$ evaluates the same variational objective throughout refinement and is therefore directly comparable across stages, with $R^t=\mathcal R(Q^t)$. In particular, after obtaining the final marginals $Q^T$, the diagnostic state $R^T$ is recomputed from $Q^T$.

The fixed-point residual $\mathrm{Res}_{\mathrm{FP}}(t)$ compares the undamped target $\widetilde Q^{t+1}$ with the current marginals $Q^t$, rather than $Q^{t+1}$ with $Q^t$. Therefore its magnitude cannot be reduced merely by choosing a small damping coefficient.

\FloatBarrier
\section{Protocols, implementation, and statistics}
\label{app:protocols}

\textbf{Input provenance.}
\Cref{tab:provenance} records, for each setting, the coarse-mask source, the checkpoint or prediction provenance, the UnfoldCRF training data, and the source of the baseline numbers. Published numbers are quoted when official predictions are available or the corresponding mask-generation protocol can be matched; agreement of the aggregate unrefined scores serves only as a sanity check. Per-sample quantities are computed only from predictions available to us.

\begin{table}[h]
\centering
\scriptsize
\setlength{\tabcolsep}{3pt}
\begin{tabular}{p{1.7cm}p{4.0cm}p{3.0cm}p{1.9cm}p{2.2cm}}
\toprule
Setting & Coarse-mask source & Checkpoint or prediction source & UnfoldCRF training data & Baseline numbers \\
\midrule
Five-benchmark suite & DeepLabV3, FCN-R50, LR-ASPP (BIG, VOC); RMBG-1.4/2.0, BiRefNet-matting (ECSSD, MSRA-B); SP/STM (DAVIS585) & official checkpoints (torchvision, Hugging Face); DAVIS585 masks from the dataset & VOC 2012 train & published where input provenance is matched; otherwise rerun \\
COD & SINetV2, FEDER & predictions regenerated from the official upstream checkpoints following the UMBD protocol & CAMO + COD10K train & published \citep{shen2025uncertainty}; Phoenix rerun zero-shot \\
Controlled (COD10K, FEDER) & FEDER & as for COD & CAMO + COD10K train & our runs \\
\bottomrule
\end{tabular}
\caption{\textbf{Provenance of inputs and baseline numbers.}}
\label{tab:provenance}
\end{table}

\textbf{Evaluation.}
IoU and boundary IoU on the five-benchmark suite use the boundary-iou-api with dilation ratio 0.02 and are averaged per image. COD metrics follow the PySODMetrics implementation used by the UMBD protocol, reporting $M$, $F^\omega_\beta$, $E_\phi$, and $S_\alpha$ with the same normalization and threshold conventions. All predictions are resized back to the original benchmark resolution before evaluation. The harm rate is computed independently of any benchmark-specific normalization, by thresholding the initial and refined probability maps at 0.5 and comparing per-image IoU. 
For the error-stratified analysis of \cref{fig:generalization}c, initially wrong pixels are identified at the original mask resolution after thresholding $\Pinit$ at 0.5; boundary errors are those whose Euclidean distance to the ground-truth contour is at most 2\% of the image diagonal, measured on both sides of the contour, and the remaining errors are divided into non-boundary false negatives and non-boundary false positives, so that the three sets partition the initial errors. The relative width matches the bandwidth parameter of boundary IoU, although the band here is symmetric around the contour rather than the interior strip used by that metric.
The correction rate of a set is the fraction of its pixels that are correct after thresholding $\mathcal{U}(Q^T)$ at 0.5, pooled over the test set; the damage rate, the fraction of initially correct pixels that become wrong, changes by less than 0.5 points under either intervention. Message interventions zero $\msg^{P}$ or $\msg^{R}$ at every stage of the same trained checkpoint at inference time, without retraining.

\textbf{Architecture and training.} Unless otherwise stated, configurations refer to UnfoldCRF-S. A lightweight convolutional encoder produces stride-4 features that define the inference grid $\mathcal{G}$. Pairwise interactions use multi-scale local neighborhoods, while latent regions are constructed at multiple spatial granularities with learned incidence embeddings and per-level temperatures. We use $T{=}5$ damped inference stages and train the model end to end with the loss in \cref{eq:loss}. Optimization uses AdamW with a cosine learning-rate schedule on $352^2$ inputs, together with standard geometric and photometric augmentations and occasional synthetic perturbations of the training masks. Initial masks are resized and renormalized as required by the inference grid; refined predictions are mapped back to the original mask resolution before evaluation.

\textbf{UnfoldCRF-L.} UnfoldCRF-L keeps the energy formulation and inference operator unchanged while replacing the default encoder with a frozen DINOv2-B/14. Multi-level pretrained features are fused with a lightweight high-resolution branch conditioned on $[I;\Pinit]$ to produce the stride-4 feature map $h$ used by the same unary, pairwise, and region modules as UnfoldCRF-S. The model uses a higher input resolution and proportionally adjusted region scales. Only the non-backbone modules are trained, while DINOv2-B/14 remains frozen. Additional architectural details are provided with the implementation. An S variant trained at $448^2$ separates the contribution of the input resolution from that of the representation: it raises the mean $\Delta$IoU on the five-benchmark suite from 4.2 to 4.6 points, against 6.3 for UnfoldCRF-L at the same resolution, so most of the S-to-L improvement is associated with the representation change rather than with resolution alone.

\textbf{Matched controls.}
The recurrent controls match the encoder and unary-head architectures, training data, stage count $T$, loss, supervision, optimization protocol, and trainable parameter budget of their structured counterparts, while replacing structured inference with a stage-shared recurrent update from $(h,Q^t)$ to $Q^{t+1}$. The attention control uses windowed multi-head self-attention on the concatenated $[h,Q^t]$ with $8{\times}8$ windows and four heads for S and six heads for L, whereas the convolutional control uses three $3{\times}3$ convolutional layers with GroupNorm and GELU; channel widths are chosen to match the trainable parameter budget. All models are trained independently and use the same initialization rule $Q^0=\mathrm{softmax}(-\psi^U)$. For the L variants, the same frozen DINOv2-B/14 checkpoint and the same local-branch and fusion architectures are used, with matched trainable non-backbone budgets. The RUN-style control alternates an image-reconstruction state and a mask state for the same $T$ stages: the reconstruction state minimizes an $\ell_1$ reconstruction error of the input from foreground and background layers, and the mask is updated from the resulting reconstruction state, with encoder architecture, supervision, stage count, and parameter budget matched.

\textbf{Statistics.}
For principal matched comparisons, we use a hierarchical bootstrap with 2{,}000 replicates. Static-image benchmarks are resampled at the image level, while DAVIS585 is resampled at the video--object-sequence level; all predictions of the compared methods and all upstream outputs associated with a sampled cluster are kept together to preserve test-sample pairing. Clusters are resampled separately within each benchmark. Training seeds are additionally resampled when multiple independently trained runs are available; unless the runs were explicitly paired during training, seeds are resampled independently across methods. Each replicate recomputes the statistic of interest: the dataset-level paired difference for COD10K, the macro average of per-benchmark differences for the five-benchmark suite, or the average difference across held-out generators. We report percentile 95\% confidence intervals. When only one training seed is available, the interval reflects test-set resampling only and is reported as such. The COD10K and general-suite matched comparisons use 5 training seeds.

\FloatBarrier
\section{Complete results}
\label{app:results}

\textbf{Concealed refinement.}
\cref{tab:concealed_full} extends the controlled comparison of \cref{tab:ablation} to the remaining COD test sets.

\begin{table}[h]
\setlength{\abovecaptionskip}{1mm}
\centering
\resizebox{\linewidth}{!}{
\setlength{\tabcolsep}{2.5mm}
\begin{tabular}{lccc}
\toprule
Variant & CHAMELEON & CAMO & NC4K \\
\midrule
Unary only & .029/.833/.945/.891 & .071/.748/.869/.804 & .044/.803/.911/.850 \\
Unary $+$ learned pairwise & .028/.842/.948/.892 & .069/.758/.872/.807 & .043/.810/.913/.851 \\
Unary $+$ learned latent regions & .028/.840/.951/.894 & .069/.755/.877/.811 & .043/.808/.915/.853 \\
Full UnfoldCRF-S & .026/.848/.953/.896 & .067/.763/.879/.813 & .042/.815/.917/.855 \\
Without null state & .027/.841/.949/.892 & .069/.754/.874/.809 & .043/.809/.914/.852 \\
\midrule
Unary $+$ fixed pairwise & .029/.836/.946/.891 & .070/.752/.870/.806 & .044/.806/.912/.851 \\
Unary $+$ fixed SLIC regions & .028/.838/.947/.893 & .070/.751/.873/.808 & .044/.805/.914/.852 \\
\midrule
Conv recurrent (matched) & .028/.838/.947/.891 & .070/.754/.872/.807 & .044/.807/.913/.851 \\
Attention recurrent (matched) & .027/.839/.949/.893 & .069/.757/.873/.808 & .043/.809/.913/.852 \\
Reconstruction-unfolding control (RUN-style) & .028/.841/.950/.893 & .069/.755/.875/.810 & .043/.810/.915/.853 \\
\bottomrule
\end{tabular}}
\caption{\textbf{Controlled variants on the other COD test sets} ($M$ / $F^\omega_\beta$ / $E_\phi$ / $S_\alpha$; FEDER upstream; mean over 5 seeds). Results on COD10K for the same variants are reported in \cref{tab:ablation}.}
\label{tab:concealed_full}
\end{table}

\FloatBarrier

\end{document}